\journal{Theoretical Computer Science}
\newcommand{\shrink}[1]{}
\newcommand{\err}{\operatorname{err}}
\newcommand{\reg}{\operatorname{reg}}
\newcommand{\wreg}{\operatorname{creg}}
\newcommand{\leaves}{L}
\renewcommand{\v}{{\rho}}
\date{}
\newcommand{\qedblob}{\mbox{\rule[-1.5pt]{5pt}{10.5pt}}}
\renewenvironment{proof}{\noindent{\bf Proof:~}}{\qedblob}
\newcommand{\I}{\mathbf{1}}
\newcommand{\E}{\mathbf{E}}
\newcommand{\cX}{{X}}
\newcommand{\cY}{{Y}}
\theoremstyle{plain}
\newtheorem{theorem}{Theorem}
\theoremstyle{plain}
\newtheorem{lemma}[theorem]{Lemma}
\theoremstyle{plain}
\newtheorem{corollary}[theorem]{Corollary}
\begin{document}

\begin{frontmatter}



\title{Error-Correcting Tournaments}

\author{Alina Beygelzimer}
\address{IBM Thomas J. Watson Research Center, Hawthorne, NY 10532}
\ead{beygel@us.ibm.com}
\author{John Langford}
\address{Yahoo! Research, New York, NY 10018}
\ead{jl@yahoo-inc.com}
\author{Pradeep Ravikumar}
\address{Department of Computer Sciences, University of Texas, Austin, TX 78712}
\ead{pradeepr@cs.utexas.edu}


\begin{abstract}
  Text of abstract We present a family of pairwise tournaments
  reducing $k$-class classification to binary classification. These
  reductions are provably robust against a constant fraction of binary
  errors, simultaneously matching the best possible computation
  $O(\log k)$ and regret $O(1)$.

  The construction also works for robustly selecting the best of
  $k$-choices by tournament.  We strengthen previous results by
  defeating a more powerful adversary than previously addressed while
  providing a new form of analysis.  In this setting, the error
  correcting tournament has depth $O(\log k)$ while using $O(k \log
  k)$ comparators, both optimal up to a small constant.
\end{abstract}

\begin{keyword} reductions
\sep multiclass classification \sep cost-sensitive learning 
\sep tournaments \sep robust search



\end{keyword}

\end{frontmatter}


\section{Introduction}
We consider the classical problem of multiclass classification,
where given an instance $x\in X$, the goal is to predict 
the most likely label $y\in \{1,\ldots,k\}$, according to some unknown 
probability distribution.

A common general approach to multiclass learning is to
reduce a multiclass problem to a set of binary classification 
problems~\cite{ASS,ECOC,GS,All-Pairs,SECOC}.
This approach is composable with any binary learning algorithm,
including online algorithms, Bayesian algorithms, and even humans.

\shrink{
An alternative is to design a multiclass learning algorithm directly, 
typically by extending an existing algorithm for binary classification.
A difficulty with this direct approach is that 
some algorithms cannot be easily modified to handle a different
learning problem.
For example, the first and still commonly used multiclass versions
of the support vector machine may not even
converge to the best possible predictor no matter how
many examples are used (see \cite{Wahba}).
A single reduction yields a number of different multiclass algorithms
in this way.

A key technique for analyzing reductions is \emph{regret analysis}, bounding
the regret of the resulting multiclass learner in terms of the 
regret of the binary classifiers on the binary problems.
Informally, regret is the
difference in loss between the predictor and the best possible
predictor on the same problem.  
Regret analysis is more refined than
loss analysis as it bounds only avoidable, excess loss, thus the bounds remain
meaningful for problems with high conditional noise.
}

A key technique for analyzing reductions
is \emph{regret analysis}, which
bounds the regret of the resulting multiclass classifier in terms of
the average classification regret on the induced binary problems.
Here {\it regret} (formally defined in Section~\ref{notation}) is the
difference between the incurred loss and the smallest
achievable loss on the problem, i.e.,
excess loss due to suboptimal prediction.

The most commonly applied reduction is
one-against-all, which creates a binary classification problem for each of
the $k$ classes. The classifier for class $i$ is trained to predict whether
the label is $i$ or not; predictions are then done by evaluating each binary
classifier and randomizing over those that predict ``yes,''
or over all labels if all answers are ``no''.

This simple reduction 
is {\it inconsistent}, in the sense that given
optimal (zero-regret) binary classifiers, the reduction may not 
yield an optimal multiclass classifier in the presence of noise. 
Optimizing squared loss of the binary predictions
instead of the $0/1$ loss makes the approach consistent, but the resulting 
multiclass regret scales as $\sqrt{2kr}$ in the worst case, where $r$ is
the average squared loss regret on the induced problems.
The Probing reduction~\cite{Probing} upper bounds $r$ by 
the average binary classification 
regret.  This composition gives a consistent reduction to binary classification,
but it has a square root dependence on the binary regret 
(which is undesirable as regrets are between 0 and 1).

The probabilistic error-correcting output code 
approach (PECOC)~\cite{SECOC} reduces $k$-class classification to learning
$O(k)$ regressors on the interval $[0,1]$, creating $O(k)$ binary examples
per multiclass example at both training and test time, with a test time
computation of $O(k^2)$. 
The resulting multiclass regret is bounded by $4\sqrt{r}$, removing
the dependence on the number of classes $k$.
When only a constant number of labels have non-zero probability given features,
the computation can be reduced to $O(k \log k)$ per example~\cite{CS}.

\medskip
This state of the problem raises several questions:
\begin{enumerate}
\item Is there a consistent reduction from multiclass to binary classification 
  that does not have a square root dependence on $r$~\cite{Bob}? 
  For example, an average binary regret of just $0.01$ may imply
  a PECOC multiclass regret of $0.4$.  
\item Is there a consistent reduction 
  that requires just $O(\log k)$ computation, 
  matching the information theoretic lower bound? 

\smallskip
  The well-known $O(\log k)$ tree reduction distinguishes between 
  the labels using a balanced binary tree, with each non-leaf
  node predicting
  ``Is the correct multiclass label to the left or not?''~\cite{Tree}.  
  As shown in Section~\ref{tree}, this method is inconsistent.
%
\item Can the above be achieved with a reduction that
  only performs pairwise comparisons between classes?  

\smallskip
  One fear associated with the PECOC approach is that it creates binary
  problems of the form ``What is the probability that the label is in
  a given random subset of labels?,''  which may be hard to solve.  Although
  this fear is addressed by regret analysis (as the latter operates only on
  avoidable, excess loss), and is 
  overstated in some cases~\cite{hunch_post,CS}, 
  it is still of some concern, especially with larger values of $k$.
\end{enumerate}
The error-correcting tournament family presented here answers all
of these questions in the affirmative.  It provides an exponentially
faster in $k$ method for multiclass prediction with the resulting multiclass
regret bounded by $5.5 r$, where $r$ is the average binary regret; and
every binary classifier logically compares two distinct class labels.

The result is based on a basic observation that if a non-leaf 
node fails to predict its binary label, which may
be unavoidable due to noise in the distribution, 
nodes between this node and the root should have
no preference for class label prediction.  Utilizing this observation,
we construct a reduction, called the \emph{filter tree}, which
uses a $O(\log k)$ computation per multiclass example at both
training and test time, and whose multiclass regret is bounded by $\log k$
times the average binary regret.

The decision process of a filter tree, viewed bottom up, can be viewed as a
single-elimination tournament on a set of $k$ players.
Using multiple independent single-elimination tournaments is of no use as it
does not affect the \emph{average} regret of an
adversary controlling the binary classifiers.
Somewhat surprisingly,
it is possible to have $\log k$ complete single-elimination 
tournaments between $k$ players in $O(\log k)$ rounds,
with no player playing twice in the same round. 
An \emph{error-correcting tournament}, 
first pairs labels in such
simultaneous single-elimination tournaments, followed by a final
carefully weighted single-elimination tournament that decides
among the $\log k$ winners of the first phase.  As
for the filter tree, test time evaluation can start at the
root and proceed to a multiclass label with $O(\log k)$
computation.

This construction is also useful for the problem of robust
search, yielding the first algorithm which allows the adversary 
to err a constant fraction of the time in the ``full lie'' setting~\cite{RGL},
where a comparator can missort any comparison.  Previous work either
applied to the ``half lie'' case where a comparator can fail to sort
but can not actively missort~\cite{Min_find,Yaos}, or to a ``full lie''
setting where an adversary has a fixed known bound on the number of
lies~\cite{RGL} or a fixed budget on the fraction of errors so
far~\cite{borgstrom,aslam}.  Indeed, it might even appear impossible
to have an algorithm robust to a constant fraction of full lie errors
since an error can always be reserved for the last comparison.  
Repeating the last comparison $O(\log k)$ times defeats this strategy.

The result here is also useful for the actual problem of tournament
construction in games with real players.  Our analysis does not assume
that errors are {\it i.i.d.}~\cite{feige}, or have known noise
distributions~\cite{Adler} or known outcome distributions given player
skills~\cite{TrueSkill}.  Consequently, the tournaments we construct
are robust against severe bias such as a biased referee or some forms
of bribery and collusion.  Furthermore, the tournaments we construct
are shallow, requiring fewer rounds than $m$-elimination bracket
tournaments, which do not satisfy the guarantee provided here.  In an
$m$-\emph{elimination bracket tournament}, bracket $i$ is a
single-elimination tournament on all players except the winners of
brackets $1,\ldots,i-1$.  After the bracket winners are determined,
the player winning the last bracket $m$ plays the winner of bracket
$m-1$ repeatedly until one player has suffered $m$ losses (they start
with $m-1$ and $m-2$ losses respectively).  The winner moves on to
pair against the winner of bracket $m-2$, and the process continues
until only one player remains.  This method does not scale well to
large $m$, as the final elimination phase takes $\sum_{i=1}^m i - 1 =
O(m^2)$ rounds.  Even for $k=8$ and $m=3$, our constructions have
smaller maximum depth than bracketed $3$-elimination.  To see that the
bracketed $m$-elimination tournament does not satisfy our goal, note
that the second-best player could defeat the first player in the first
single elimination tournament, and then once more in the final
elimination phase to win, implying that an adversary need control only
two matches.

\paragraph{Paper overview}
We begin by defining the basic concepts and introducing some of the 
notation in Section~\ref{notation}.
Section~\ref{tree} shows that the simple divide-and-conquer tree
approach is inconsistent, motivating the
Filter Tree algorithm described in section~\ref{S:algorithm} (which applies to
more general cost-sensitive multiclass problems).
Section~\ref{S:analysis} proves that the algorithm has the best
possible computational dependence, and gives two upper bounds on the regret
of the returned (cost-sensitive) multiclass classifier.
Subsection~\ref{S:experiments} presents some
experimental evidence that the Filter Tree is indeed a practical
approach for multiclass classification.

Section~\ref{sec:Multi-Elimination} presents the error-correcting
tournament family parametrized by an integer $m \geq 1$,
which controls the tradeoff between maximizing robustness ($m$ large)
and minimizing depth ($m$ small).  Setting $m=1$ gives the Filter
Tree, while $m = 4 \ln k$ gives a (multiclass to binary) 
regret ratio of $5.5$ with $O(\log k)$
depth. Setting $m=c k$ gives regret ratio of $3 + O(1/c)$ with depth
$O(k)$.  The results here provide a nearly free
generalization of earlier work~\cite{Min_find} in the robust search
setting, to a more powerful adversary that can missort as well as fail
to sort. 
%

Section~\ref{sec:LB} gives an algorithm independent lower bound of 2 on
the regret ratio for large $k$.  When the number of calls to a binary
classifier is independent (or nearly independent) of the label
predicted, we strengthen this lower bound to $3$ for large $k$.

\section{Preliminaries}
\label{notation}
Let $D$ be the underlying distribution over
$\cX \times \cY$, where $\cX$ is some observable feature
space and $\cY = \{1,\ldots,k\}$ is the label space.
The \emph{error rate} of a classifier $f:  \cX \rightarrow \cY$
on $D$ is given by
\[
\err(f,D) = \mathbf{Pr}_{(x,y)\sim D} [f(x) \neq  y].
\]
The \emph{multiclass regret} of $f$ on $D$ is defined as
\[
\reg(f,D) = \err(f,D) - \min\limits_{g: \cX \rightarrow \cY} \err(g,D).
\]
The algorithms here extend to the \emph{cost-sensitive} 
case, where the underlying distribution $D$ is over $\cX\times[0,1]^{k}$.
The \emph{expected cost} of a classifier $f: \cX\rightarrow \cY$
on $D$ is
\[
\ell(f,D) = \mathbf{E}_{(x,c)\sim D}\left[c_{f(x)}\right].
\]
Here $c \in [0,1]^{k}$
gives the cost of each of the $k$ choices for $x$.
As in the multiclass case, the \emph{cost-sensitive regret} 
of $f$ on $D$ is defined as
\[
\wreg(f,D) = \ell(f,D) - \min_{g: \cX \rightarrow \cY} \ell(g,D).
\]

\section{Inconsistency of Divide and Conquer Trees}
\label{tree}
One standard approach for reducing multiclass learning to binary learning
is to split the set of labels in half, learn
a binary classifier to distinguish between the two subsets, and repeat
recursively until each subset contains one label.  Multiclass
predictions are made by following a chain of classifications from the root
down to the leaves.

This tree reduction transforms $D$ into a distribution $D_T$ 
over binary labeled examples
by drawing a multiclass example $(x,y)$ from $D$ and
a random non-leaf node $i$, 
and outputting instance $\langle x,i\rangle$ with label 1 if $y$ is in the 
left subtree of node $i$, and 0 otherwise.
A binary classifier $f$ for this induced problem gives a multiclass 
classifier $T(f)$, via a chain 
of binary predictions starting from the root.

The following theorem gives an example of a multiclass problem such
that even if we have an optimal classifier for the induced
binary problem at each node, the tree reduction does not
yield an optimal multiclass predictor. 

\begin{theorem}
For all $k \geq 3$, for all binary trees over the labels, there exists
a multiclass distribution $D$ such that 
$\ \reg(T(f^*),D) > 0$
for any $f^* = \arg \min\limits_f \err(f,D_T)$.
\end{theorem}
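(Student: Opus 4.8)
The plan is to construct, for any fixed binary tree structure over the $k$ labels, an explicit multiclass distribution $D$ on which the optimal (Bayes-optimal) binary classifiers at each node route the prediction to a label that is \emph{not} the multiclass-optimal one, so that $\reg(T(f^*),D) > 0$. Since the claim must hold for an arbitrary tree, the first step is to look at the root node: it partitions the labels into a left subtree and a right subtree, one of which contains at least $\lceil k/2 \rceil \geq 2$ labels. I would place the probability mass so as to exploit the conflict between what is optimal at the root (a coarse, aggregated decision between the two halves) and what is optimal overall (picking the single most likely label).

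The key idea is that the induced binary label at the root asks only ``is $y$ in the left subtree?'', so the Bayes-optimal root classifier sends the chain into whichever subtree carries more conditional probability mass, regardless of how that mass is distributed \emph{within} the subtree. Concretely, I would choose three labels and a distribution in which a single label, say $a$, in the smaller (say right) subtree is the unique mode of $D$, while the larger (left) subtree holds two labels $b,c$ each with probability slightly below $a$ but summing to strictly more than the mass on $a$. Then the optimal multiclass predictor is the constant $a$, achieving error $1 - \Pr[a]$, whereas the root's optimal binary decision points \emph{left} (because $\Pr[b] + \Pr[c] > \Pr[a]$), committing the tree to one of $b,c$ and thus to an error of at least $1 - \max(\Pr[b],\Pr[c]) > 1 - \Pr[a]$. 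For $k > 3$ I would give the remaining labels zero (or negligible) probability so they never affect the argument, reducing the general case to this three-label core. This handles the feature-free case; since the example uses a single distribution over labels with no dependence on $x$, the classifier $f$ and the minimizing $g$ are effectively constants, which keeps the verification clean.

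The one subtlety requiring care is the tie-breaking and the definition of $f^*$ as \emph{any} minimizer of the induced binary error. Because $T(f)$ is built from a chain of independent binary predictions, I must ensure that every such minimizer at the root genuinely prefers the ``wrong'' subtree, i.e., I want the binary problem at the root to have a \emph{strict} optimum so that no optimal $f^*$ can legitimately route the other way. This is achieved by the strict inequality $\Pr[b]+\Pr[c] > \Pr[a]$ above; I would also need to confirm that the optimal binary classifiers at the internal nodes of the chosen subtree are well-defined and lead to a definite leaf, which follows by giving $b$ and $c$ distinct probabilities so that every downstream node also has a strict optimum. A clean way to present this is to verify directly that $\err(T(f^*),D) \geq 1 - \max(\Pr[b],\Pr[c])$ while $\min_g \err(g,D) = 1 - \Pr[a]$, and then the strict gap $\Pr[a] > \max(\Pr[b],\Pr[c])$ gives $\reg(T(f^*),D) > 0$.

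I expect the main obstacle to be purely bookkeeping rather than conceptual: phrasing the argument so it is manifestly uniform over \emph{all} binary trees. The crucial structural fact is simply that the root always splits into two nonempty parts, at least one of which has size $\geq 2$; the three-label construction only needs two labels on one side and one label on the other, which any tree on $k \geq 3$ labels accommodates. Making sure the chosen labels land on the correct sides of the root split for an arbitrary given tree — rather than assuming a particular tree shape — is the point that must be stated carefully, but it is forced by the freedom to assign probabilities to whichever labels the root separates.
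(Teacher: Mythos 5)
Your proposal is correct and is essentially the paper's own argument: both construct a three-label distribution in which two labels on one side of a tree split jointly carry more mass than a single, individually most-likely label on the other side, so the optimal binary classifier at that split routes away from the best label and the resulting prediction has strictly positive regret. The only cosmetic difference is that you host the conflict at the root, whereas the paper uses a node with a two-versus-one leaf split (zeroing out a label when the tree is balanced) and specific masses $\tfrac14+\epsilon,\ \tfrac14+\epsilon,\ \tfrac12-2\epsilon$; both choices work for every binary tree on $k \geq 3$ labels.
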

\begin{proof}
Find a node with one subset corresponding to two labels and the other
subset corresponding to a single label. (If the tree is perfectly
balanced, simply let $D$ assign probability 0 to one of the labels.)
Since we can freely rename labels without changing the underlying
problem, let the first two labels be $1$ and $2$, and the third label
be $3$.

Fix any $\epsilon \in (0,1/12)$.
Choose $D$ with the property that labels $1$ and $2$
each have a $\frac{1}{4}+\epsilon$ chance of being drawn given $x$,
and label $3$ is drawn with the remaining probability of 
$\frac{1}{2}-2\epsilon$.
Under this distribution,
the fraction of examples for which label $1$ or $2$ is correct is $\frac{1}{2} +
2\epsilon$, so any minimum error rate binary predictor must choose either
label $1$ or label $2$.  Each of these choices has an error rate of
$\frac{3}{4}-\epsilon$.  The optimal multiclass predictor chooses label $3$ and
suffers an error rate of $\frac{1}{2}+2\epsilon$, implying that the regret 
of the tree classifier based on the optimal binary classifier is
$\frac{1}{4} - 3\epsilon$, which is strictly greater than 0 as $\epsilon < 1/12$.
\end{proof}

\section{The Filter Tree Algorithm}\label{S:algorithm}
\begin{figure}[t]
\centering
\includegraphics[width=.6\textwidth]{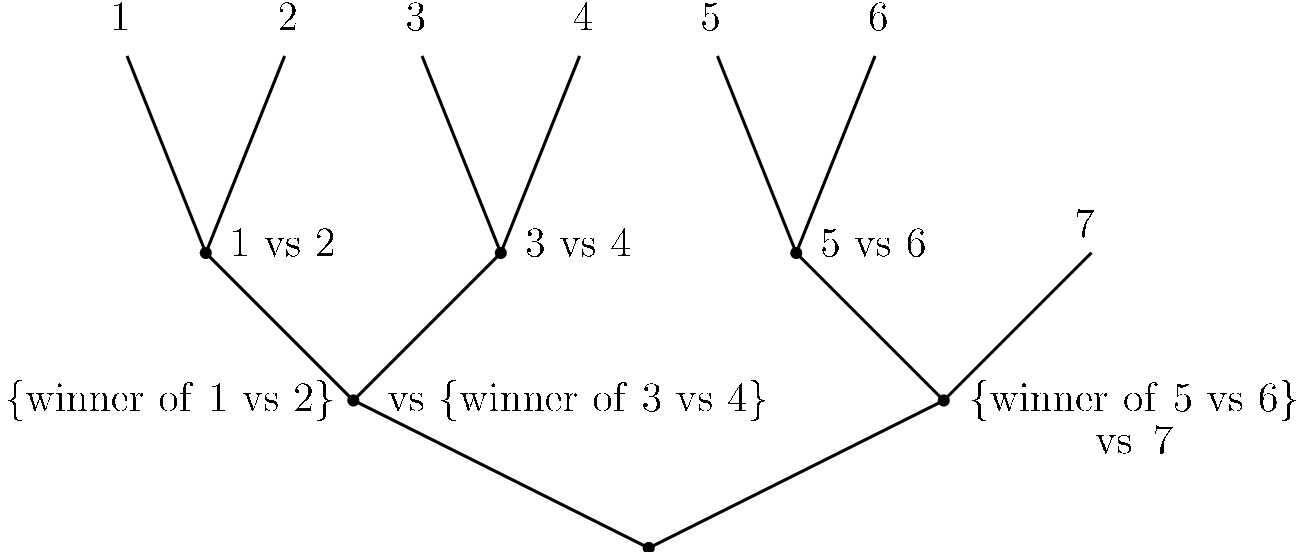}
\caption{\label{fig:filter-tree} 
Filter Tree.  Each node predicts whether the left 
or the right input label is more likely, conditioned on a given $x\in \cX$.
The root node predicts the best label for $x$.}
\end{figure}

The Filter Tree algorithm, illustrated by
Figure~\ref{fig:filter-tree},
is equivalent to a single-elimination tournament on the set of labels,
structured as a binary tree $T$ over the labels.
In the first round, the labels are paired according to the lowest
level of the tree, and a classifier is trained for each pair to
predict which of the two labels is more likely.  (The labels that don't
have a pair in a given round, win that round for free.)  The winning
labels from the first round are in turn paired in the second round,
and a classifier is trained to predict whether the winner of one pair
is more likely than the winner of the other.  The process of training
classifiers to predict the best of a pair of winners from the previous
round is repeated until the root classifier is trained.

\shrink{
The setting above is akin to Boosting:  At each round $t$,
a booster creates an input distribution $D_t$ and calls an
oracle learning algorithm to obtain a classifier with some error
$\epsilon_t$ on $D_t$.  The distribution $D_t$ depends on the classifiers
returned by the oracle in previous rounds.
The accuracy of the final classifier is analyzed in terms of $\epsilon_t$'s.
}


The key trick in the training stage (Algorithm~\ref{alg:filter}) is to
form the right training set at each interior node.
We use $T_n$ to denote the subtree of $T$ rooted at node $n$, and
$\leaves(T)$ to denote the set of leaves in the tree $T$.
A training example for node
$n$ is formed {conditioned} on the predictions of classifiers in
the round before it.  Thus the learned classifiers from the first
level of the tree are used to ``filter'' the distribution over
examples reaching the second level of the tree.

Given $x$ and classifiers at each node, every edge in $T$ is identified 
with a unique label.
The optimal decision at any non-leaf node is to
choose the input edge (label) that is more likely according to the true
conditional probability.
This can be done by
using the outputs of classifiers in the round before it as a filter
during the training process: For each observation, we set the label to
0 if the left parent's output matches the multiclass label, 1 if the
right parent's output matches, and reject the example otherwise.

\algsetup{indent=2em}
\begin{algorithm}
\caption{{Filter tree training} (multiclass training set $S$, 
binary learning algorithm \texttt{Learn})}
\label{alg:filter}
\begin{algorithmic}
\medskip
\STATE
Define $y_u=1$ if label $y$ is in the left subtree of node $u$; otherwise $y_u=0$. \\
\medskip
\FORALL{non-leaf node $n$ in order from leaves to root}
\STATE Set $S_n = \emptyset$ 
\FORALL{$(x,y) \in S$ such that $y\in \leaves(T_n)$ 
and all nodes $u$ on the path $n \leadsto y$ predict $y_{u}$ given $x$}
\STATE add $(x, y_n)$ to $S_n$
\ENDFOR
\STATE Let $f_n = \texttt{Learn}(S_n)$ 
\ENDFOR
\RETURN $f=\{ f_n \}$
\end{algorithmic}
\end{algorithm}

\begin{algorithm}
\caption{Cost-sensitive filter tree training (cost-sensitive training set $S$, 
importance weighted binary learner \texttt{Learn})}
\label{alg:cs-filter-tree} 
\begin{algorithmic}[1]
\medskip
\FORALL{non-leaf node $n$ in the order from leaves to root}
\STATE Set $S_n = \emptyset$ 
\FORALL{example $(x,c_1,...,c_k) \in S$} \label{line3}
\STATE
Let $a$ and $b$ be the two classes input to $n$ 
\STATE $S_n \leftarrow S_n \cup 
	\{ (x,\arg \min \{c_a,c_b\}, \underbrace{|c_a - c_b |}_{w_n(x,c)})\}$\vskip -.18in \label{line5}
\ENDFOR
\STATE
Let $f_n = \texttt{Learn}(S_n)$ 
\ENDFOR
\RETURN $f=\{f_n\}$
\end{algorithmic}
\end{algorithm}

\shrink{
\begin{algorithm}
\caption{Filter tree testing
(classifiers $\{ c_n \}$, test example $x\in \cX$)}
\label{alg:filter-tree-test}
\begin{algorithmic}
\RETURN
leaf $l$ such that every classifier on the path from
$l$ to the root prefers $l$
\end{algorithmic}
\end{algorithm}
}

Algorithm~\ref{alg:cs-filter-tree} extends this idea to the
cost-sensitive multiclass case where each choice has a different associated 
cost, as defined in Section~\ref{notation}.  
The algorithm relies upon an \emph{importance
weighted} binary learning algorithm \texttt{Learn}, which takes examples of the form
$(x,y,w)$, where $x\in X$ is a feature vector used for prediction, 
$y\in \{0,1\}$ is a binary label, and $w\in [0,\infty)$ is the importance 
any classifier pays if it doesn't predict $y$ on $x$. 
The importance weighted problem 
can be further reduced to binary classification using
the Costing reduction~\cite{costing}, which alters the underlying
distribution using rejection sampling on the importances.
This is the reduction we use here.

The testing algorithm is the same for both multiclass 
and cost-sensitive variants, and is very simple:
Given a test example $x\in X$, we output the label $y$ such that every
classifier on the path from the root to $y$ prefers $y$.

\section{Filter Tree Analysis}\label{S:analysis}
Before analyzing the regret of the algorithm, we note its computational
characteristics.

\subsection{Computational Complexity}
Since the algorithm is a reduction, 
we count the computational complexity of the reduction itself, 
assuming that oracle calls take unit time.

Algorithm~\ref{alg:filter} requires
$O(\log k)$ computation per multiclass example,
by searching for the correct leaf in
$O(\log k)$ time, then filtering back toward the root.
This matches the information theoretic lower bound 
since simply reading one of $k$ labels requires $\lceil \log_2 k\rceil$ bits.

Algorithm~\ref{alg:cs-filter-tree} requires $O(k)$
computation per cost-sensitive
example, because there are $k-1$ nodes, 
each requiring constant computation per example.
Since any method must read the $k$ costs, this bound is tight.

Testing
requires $O(\log k)$ computation per  
example to descend a binary tree.  Any method must write out 
$\lceil\log_2 k\rceil$ bits to specify its prediction.

\shrink{
\subsection{Regret Analysis}
Algorithm~\ref{alg:cs-filter-tree}
transforms cost-sensitive multiclass examples
into importance weighted binary labeled examples.
This process implicitly
transforms a distribution $D$ over cost-sensitive multiclass examples
into a distribution $D_{T}$ over importance weighted binary examples.

There are many induced problems, one for each call to the oracle
\texttt{Learn}.
To simplify the analysis, we can use a standard
transformation allowing us to consider only a single induced distribution:
add the node index $n$ as an additional feature into each
importance weighted binary example, and then train based upon the
union of all the training sets.  The learning algorithm produces a
single binary classifier $f(x,n)$ for which we can redefine
$f_n(x)$ as $f(x,n)$. The induced distribution $D_{T}$ can be defined
by the following process: (1) draw a cost-sensitive example $(x,c)$ from $D$, 
(2) pick a random node $n$, (3) create an importance-weighted sample 
according to the algorithm, except using $\langle x,n\rangle$ instead of $x$.
The induced distribution $D_{T}$ can be defined by the following process: (1)
draw a cost-sensitive example $(x,c)$ from $D$, (2) pick a random node $n$, 
(3) create an importance-weighted sample according to the
algorithm, except using $\langle x,n\rangle$ instead of $x$.

The theorem is quantified over {all} classifiers, and thus it holds
for the classifier returned by the algorithm.  In practice, one can 
either call the oracle multiple times to learn a separate classifier
for each node (as we do in our experiments), or use iterative
techniques for dealing with the fact that the classifiers are
dependent on other classifiers closer to the leaves.

The fact that classifiers are conditionally dependent on other
classifiers closer to the leaves is not problematic: In practice,
there are known effective iterative techniques for dealing with this
cyclic dependence; alternatively, a separate classifier can be trained
for each node.  In theory, we quantify over \emph{all} predictors,
which means that we can regard the learned predictor as fixed,
implying the induced problem is well defined.
}

\subsection{Regret Analysis}
Algorithm~\ref{alg:cs-filter-tree}
transforms each cost-sensitive multiclass example (line~\ref{line3})
into importance weighted binary labeled examples (line~\ref{line5}),
one for every non-leaf node $n$ in the tree.  This process implicitly
transforms the underlying 
distribution $D$ over cost-sensitive multiclass examples
into a distribution $D_n$ over importance weighted binary examples at each $n$.

We can further reduce from importance weighted binary classification 
to binary classification using
the Costing reduction~\cite{costing}, which
alters each $D_n$ using rejection sampling on the importance weights.
This composition further transforms $D_n$ into a distribution            
$D'_n$ over binary examples.\shrink{
Instead of considering the average binary regret over multiple induced 
problems, we can use a standard
transformation to consider only a single induced problem: 
add the node index $n$ as an additional feature into each
importance weighted binary example, and then train based upon the
union of all the training sets.  The learning algorithm produces a
single binary classifier $f(x,n)$ for which we can redefine
$f_n(x)$ as $f(x,n)$. The induced distribution can be defined
by the following process: (1) draw a cost-sensitive example $(x,c)$ from $D$,
(2) pick a random node $n$, (3) create an importance-weighted sample
according to the algorithm, except using $\langle x,n\rangle$ instead of $x$,
and rejection sample it according to Costing to remove the weight.
} 

\shrink{
We use the folk theorem from \cite{costing} saying that for all binary
classifiers $f$ and all importance weighted binary distributions $P$,
the importance weighted binary regret of $f$ on $P$ is upper bounded
by ${E}_{(x,y,w)\sim P} [w]$ 
times the binary regret of $f$ on the induced
binary distribution. 
}

Let $f_n$ be a classifier for the binary classification problem
induced at node $n$.  The relevant quantity is the \emph{average binary
regret},
\begin{align}
\reg(f,D') = \frac{1}{\sum_{n\in T}W_n} 
\sum_{n\in T}\reg(f_n,D'_n)\,W_n,
\label{abr}
\end{align}
where $W_n = \E_{(x,c)\sim D}w_n(x,c)$, and 
$w_n(x,{c})$ is the importance weight formed in line~\ref{line5} of
Algorithm~\ref{alg:cs-filter-tree} (the difference in
cost between the two labels that node $n$ chooses between on $x$).
This quantity, which is just the average
\emph{importance weighted} binary regret of $f_n$ on $D_n$, 
is induced by the reduction (Algorithm~\ref{alg:cs-filter-tree}).

The core theorem below relates $\reg(f,D')$
to the regret of the resulting cost-sensitive classifier $T(f)$ on
$D$.  Again, given a test example $x\in X$, the classifier $T(f)$
returns the unique label $y$ such that every $f_n$ on the path from
the root to $y$ prefers $y$.

This type of analysis is similar to Boosting: At each round $n$, the
booster creates an input distribution $D_n$ and calls a weak learning
algorithm to obtain a classifier $f_n$, which has some error rate on
$D_n$.  The distribution $D_n$ depends on the classifiers returned by
the oracle in previous rounds.  The accuracy of the final classifier
on the original distribution $D$ is analyzed in terms of these error
rates.

\begin{theorem}\label{main}
For all binary classifiers $f$ and all cost-sensitive multiclass
distributions $D$, 
\[
\wreg(T(f),D) 
\le \reg(f,D')\, \sum_{n\in T} W_n,
\]
where $W_n = \E_{(x,c)\sim D}w_n(x,c)$, and
$w_n(x,{c})$ is the importance weight formed in line~\ref{line5} of
Algorithm~\ref{alg:cs-filter-tree} (the difference in 
cost between the two labels that node $n$ chooses between on $x$).
\end{theorem}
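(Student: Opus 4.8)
The plan is to bound the cost-sensitive regret of the tree classifier $T(f)$ by charging it, node by node along the root-to-leaf path, against the importance-weighted binary regret at each node. The key idea is that the cost-sensitive regret decomposes into a sum over the internal nodes of the tree, where each node contributes its own ``local'' regret weighted by the importance $W_n$. Concretely, for a fixed $x$, let me track the label that $T(f)$ actually outputs versus the optimal label $g^*(x) = \arg\min_y c_y$. Both labels are leaves, and the decisions made by the classifiers $f_n$ determine which leaf propagates to the root. The excess cost $c_{T(f)(x)} - c_{g^*(x)}$ can be written as a telescoping sum along the path where the two labels diverge.

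First I would set up the per-node accounting. At each internal node $n$ with input labels $a$ and $b$, the cost-sensitive reduction (line~\ref{line5}) creates an importance-weighted binary example with importance $w_n(x,c) = |c_a - c_b|$ and correct label $\arg\min\{c_a, c_b\}$. The crucial observation is that if $f_n$ predicts the worse of its two inputs, it incurs an importance-weighted loss of exactly $w_n(x,c)$ relative to the optimal binary decision at that node; the corresponding excess cost passed up toward the root is at most this same quantity. So the plan is to show that the multiclass excess cost on any $x$ is upper bounded by the sum over nodes $n$ of $w_n(x,c)$ times the indicator that $f_n$ makes a suboptimal choice on the relevant filtered example.

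Next I would take expectations over $(x,c)\sim D$ and reorganize. Summing the per-node bound and using that $\E_{(x,c)\sim D}[w_n(x,c)\cdot \I(f_n \text{ errs})]$ is exactly the importance-weighted binary \emph{loss} at node $n$, I relate the total to $\sum_n W_n \cdot \ell(f_n, D_n)$. To convert loss into \emph{regret}, I use that the optimal binary predictor at each node picks $\arg\min\{c_a,c_b\}$, so that the regret $\reg(f_n, D'_n)$ captures precisely the avoidable excess, and the minimum-cost label at the root is reachable by optimal play. Dividing and multiplying by $\sum_n W_n$ then matches the definition of $\reg(f,D')$ in \eqref{abr}, yielding the claimed bound $\wreg(T(f),D) \le \reg(f,D')\sum_{n\in T} W_n$.

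The hard part will be the per-node charging argument, specifically justifying that the excess cost propagated to the root telescopes cleanly and is dominated by the sum of local importance-weighted errors. The subtlety is the \emph{conditioning}: the example reaching node $n$ depends on the predictions of classifiers below it, so I must argue that the relevant comparison at $n$ is between the label that actually arrives and the optimal label among the leaves of $T_n$, and that no double-counting occurs across the path. The key insight enabling this is the filter-tree design principle noted earlier in the excerpt: once a classifier below $n$ has already ``missorted,'' the higher nodes face a comparison in which both surviving candidates are equally acceptable from the standpoint of \emph{further} avoidable cost, so the regret is not charged twice. Making this precise — that each unit of cost-sensitive regret maps to exactly one node's importance-weighted binary regret — is the crux of the proof.
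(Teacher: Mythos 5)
Your plan matches the paper's proof: the paper formalizes your telescoping/charging argument as a leaves-to-root induction showing that, conditioned on $x$, the regret of each subtree's output relative to its best leaf is bounded by the sum of per-node regrets $r_n \in \{0, |\E[c_a]-\E[c_b]|\}$, and then applies the Costing translation theorem to bound each $r_n$ by $W_n\,\reg(f_n,D'_n)$ before reassembling via the definition of average binary regret. The crux you flag --- that each unit of excess cost is charged to exactly one node along the divergence path --- is precisely what that subtree induction resolves, so your approach is essentially the paper's.
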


\noindent
Before proving the theorem, we state 
the corollary for multiclass classification.

\begin{corollary} \label{cor:multi}
For all binary classifiers $f$ and multiclass distributions $D$,
\[
\reg(T(f),D) \le d \, \reg(f,D'),
\]
where $d$ is the depth of the tree $T$.
\end{corollary}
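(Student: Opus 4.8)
The plan is to derive the corollary directly from Theorem~\ref{main} by specializing the cost-sensitive guarantee to ordinary multiclass classification and then bounding the weight-sum $\sum_{n\in T}W_n$ by the tree depth $d$. Concretely, I would encode a multiclass distribution $D$ over $\cX\times\cY$ as a cost-sensitive distribution by assigning each drawn example $(x,y)$ the $0/1$ cost vector $c$ with $c_j=\I[j\neq y]$. Under this encoding $c_{f(x)}=\I[f(x)\neq y]$, so $\ell(f,D)=\err(f,D)$ for every $f$, and consequently $\wreg(\cdot,D)=\reg(\cdot,D)$. Theorem~\ref{main} then reads $\reg(T(f),D)\le \reg(f,D')\sum_{n\in T}W_n$, and it remains only to show $\sum_{n\in T}W_n\le d$.

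The key step is a per-example accounting argument. Fix $(x,y)$; since $x$ fixes all the classifier predictions, it fixes the two labels $a,b$ input to each non-leaf node $n$, and hence fixes each weight $w_n(x,c)=|c_a-c_b|$. With the $0/1$ costs above, $w_n(x,c)=1$ precisely when exactly one of $a,b$ equals the true label $y$, i.e.\ when $y$ is one of the two competitors at $n$, and $w_n(x,c)=0$ otherwise. The label $y$ competes at a node $n$ only if it won every match on the path from its own leaf up to $n$; once $y$ loses a single match it is eliminated and can never reappear as a competitor higher in the tree. Therefore the set of nodes at which $y$ competes is an initial segment of the leaf-to-root path of $y$, whose length is at most the depth $d$. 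This gives $\sum_{n\in T}w_n(x,c)\le d$ for every $(x,y)$.

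Finally I would take expectations: $\sum_{n\in T}W_n=\sum_{n\in T}\E_{(x,c)\sim D}w_n(x,c)=\E_{(x,c)\sim D}\bigl[\sum_{n\in T}w_n(x,c)\bigr]\le d$, and substitute into the specialized form of Theorem~\ref{main} to conclude $\reg(T(f),D)\le d\,\reg(f,D')$. The only real content is the elimination-path observation of the second paragraph; the cost encoding and the expectation swap are routine. I therefore expect the main (though mild) obstacle to be arguing cleanly that $y$'s competition nodes form a single contiguous prefix of its root-path, so that their number never exceeds $d$ regardless of which competitors $x$ induces.
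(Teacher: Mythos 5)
Your proposal is correct and follows essentially the same route as the paper: specialize Theorem~\ref{main} to $0/1$ cost vectors and bound $\sum_{n\in T}W_n$ by $d$ via the observation that $w_n(x,c)=1$ only at nodes where the true label $y$ is a competitor, which happens at most once per level (equivalently, along a prefix of $y$'s leaf-to-root path). Your elimination-path argument is just a slightly more explicit rendering of the paper's one-line remark that at most one node per level has importance weight $1$.
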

\noindent
Since all importance weights are either 0 or 1, 
we don't need to apply Costing in the multiclass case.
The proof of the corollary given the theorem is simple since for any $(x,y)$,
the induced $(x,{c})$ has at most one node per level with 
induced importance weight 1; all other importance weights are 0.
Therefore, $\sum_n w_n(x,{c}) \leq d$.

Theorem~\ref{bound} provides an alternative bound for cost-sensitive 
classification.  It is the first known bound giving
a worst-case dependence of less than $k$.
\begin{theorem}
For all binary classifiers $f$ and all cost-sensitive $k$-class
distributions $D$,
\[
\wreg(T(f),D) \le k \reg(f,D')/2,
\]
where $T(f)$ and
$D'$ are as defined above.
\label{bound}
\end{theorem}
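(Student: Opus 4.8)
\noindent
The plan is to reduce this to Theorem~\ref{main} by controlling the total importance weight, namely to show $\sum_{n\in T} W_n \le k/2$. Since $W_n = \E_{(x,c)\sim D}\,w_n(x,c)$, linearity of expectation gives $\sum_{n\in T} W_n = \E_{(x,c)\sim D}\bigl[\sum_{n\in T} w_n(x,c)\bigr]$, so it is enough to establish the \emph{pointwise} bound $\sum_{n\in T} w_n(x,c)\le k/2$ for every fixed $(x,c)$. Substituting this into Theorem~\ref{main} then yields $\wreg(T(f),D)\le \reg(f,D')\cdot k/2$ directly.

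\noindent
To prove the pointwise bound, fix $c$ and observe that the weights depend only on $c$ and the tree, not on $f$: processing leaves to root, each non-leaf node receives the minimum-cost label of each child subtree, so writing $m(T_n)=\min_{y\in\leaves(T_n)}c_y$ for the smallest cost in the subtree rooted at $n$, the child subtrees $L,R$ of $n$ satisfy $w_n=|m(T_L)-m(T_R)|$ and $m(T_n)=\min\bigl(m(T_L),m(T_R)\bigr)$. A naive induction on the target $\sum_n w_n\le k/2$ does not close, since combining two half-trees with a root only gives $k/4+k/4+1$. The crux is to strengthen the inductive hypothesis so that it carries the subtree minimum, namely to prove for every balanced subtree $T'$ (with non-leaf node set summed over) that
\[
\sum_{u\in T'} w_u \;\le\; \tfrac{1}{2}\,|\leaves(T')|\,\bigl(1-m(T')\bigr).
\]

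\noindent
First I would check the base case of a two-leaf subtree, where $\sum w=|c_a-c_b|=\max(c_a,c_b)-m(T')\le 1-m(T')$, matching $\tfrac12|\leaves(T')|=1$. For the inductive step, let $T'$ have $k$ leaves and children $T_L,T_R$ each with $k/2$ leaves and minima $m_L,m_R$; with $s=\min(m_L,m_R)=m(T')$ and $\ell=\max(m_L,m_R)$, the hypothesis and the recursion give
\[
\sum_{u\in T'} w_u \le \tfrac{k}{4}(1-m_L)+\tfrac{k}{4}(1-m_R)+(\ell-s) = \tfrac{k}{2} - s\bigl(\tfrac{k}{4}+1\bigr) + \ell\bigl(1-\tfrac{k}{4}\bigr).
\]
The single remaining observation is that for $k\ge 4$ the coefficient $1-\tfrac{k}{4}$ is nonpositive, so $\ell\ge s$ lets me replace $\ell$ by $s$ in the last term, collapsing the bound to $\tfrac{k}{2}(1-s)=\tfrac{k}{2}\bigl(1-m(T')\bigr)$ and closing the induction. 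Evaluating at the root and using $m(T)\ge 0$ gives $\sum_n w_n\le k/2$.

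\noindent
I expect the main obstacle to be identifying the correct strengthening of the inductive hypothesis rather than any individual calculation; the $(1-m)$ factor is exactly what absorbs the additive root term that a direct induction leaks at every level. A secondary subtlety worth flagging is that the step uses the balanced structure of $T$ (both children carrying $k/2$ leaves, together with the sign of $1-\tfrac{k}{4}$): for a badly unbalanced tree such as a caterpillar the per-example weight sum can exceed $k/2$, so $k/2$ is genuinely the balanced-tree constant, and that is the point where I would take care in handling values of $k$ that are not powers of two.
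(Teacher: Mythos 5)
Your reduction to Theorem~\ref{main} rests on the pointwise claim $\sum_{n\in T} w_n(x,c)\le k/2$, and that claim is false. The gap is in the assertion that ``each non-leaf node receives the minimum-cost label of each child subtree'': in Algorithm~\ref{alg:cs-filter-tree} the two classes $a,b$ input to node $n$ are whatever labels the \emph{trained classifiers} in the subtrees below pass up, not the subtree minima, so the importance weights do depend on $f$. When lower-level classifiers err, the weight sum exceeds $k/2$ even on a balanced tree. Concretely, take $k=4$ with costs $c=(0,1,0,1)$ and the complete tree pairing $(1,2)$ and $(3,4)$: both leaf-level nodes have weight $1$; if the classifier at the $(1,2)$ node wrongly outputs label $2$ while the $(3,4)$ node correctly outputs label $3$, the root compares costs $1$ and $0$ and gets weight $1$, so $\sum_n w_n(x,c) = 3 > k/2 = 2$. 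Iterating this pattern (alternating costs $0,1$, with errors arranged so every comparison pits a cost-$0$ label against a cost-$1$ label) drives the sum up to $k-1$, so the best one can extract from Theorem~\ref{main} by bounding $\sum_n W_n$ alone is a factor of $k-1$, not $k/2$.

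This is exactly why the paper does not prove Theorem~\ref{bound} as a corollary of Theorem~\ref{main}. Instead it fixes $(x,c)$ and proves by induction on the tree the inequality $\mathrm{reg}_T\,S_T \le \tfrac{k}{2} I_T$, where $S_T$ is the total importance and $I_T$ is the importance accumulated at erring nodes; the auxiliary Lemma~\ref{claim1}, $S_T + c_T \le I_T + \tfrac{k}{2}$, captures precisely the point your argument misses: $S_T$ can exceed $k/2$ only by an amount chargeable to $I_T$, i.e.\ to classifier errors, so the regret \emph{ratio} still comes out to $k/2$. Your strengthened inductive hypothesis with the $\bigl(1-m(T')\bigr)$ factor is a clean idea and its algebra is correct, but it analyzes an idealized tournament in which every subtree promotes its cheapest label --- the zero-binary-regret case, where the theorem is vacuous. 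To salvage the approach you would need an invariant coupling the weight sum to the error mass, which is what Lemma~\ref{claim1} provides.
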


\medskip\noindent
A simple example in Section~\ref{example} shows that this bound is
essentially tight.

\bigskip\noindent
The proof of Theorem~\ref{main} uses the following folk theorem 
from \cite{costing}.

\begin{theorem}
\label{th:translate}\emph{(Translation Theorem \cite{costing})}
For any importance-weighted distribution $P$, there exists a constant
$\langle c \rangle = \E_{(x,y,c)\sim P}[c]$ such that for
any classifier $f$,
\[ 
\E_{(x,y,c)\sim P}[c \cdot \I(f(x)\neq y)] = 
\langle c \rangle
\E_{(x,y,c)\sim {P}'}[\I(f(x)\neq y)],
\]
where ${P'}(x,y,c) = \frac{c}{\langle c \rangle}P(x,y,c)$.
\end{theorem}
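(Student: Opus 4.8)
The plan is to prove the identity by a direct change of measure, treating $P'$ as a reweighted version of $P$ whose density relative to $P$ is proportional to the importance weight $c$. The statement is essentially the observation that multiplying the weight $c$ into the $0/1$ loss is the same as folding $c$ into the measure and then measuring \emph{unweighted} loss, once we renormalize by the total weight $\langle c \rangle$. So the whole argument reduces to confirming that the renormalization constant is exactly $\langle c \rangle$ and then substituting.

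First I would verify that $P'$ is a legitimate probability distribution. By definition $P'(x,y,c) = \frac{c}{\langle c \rangle} P(x,y,c)$, so integrating (or summing, in the discrete case) over all $(x,y,c)$ yields $\frac{1}{\langle c \rangle}\,\E_{(x,y,c)\sim P}[c] = \frac{\langle c \rangle}{\langle c \rangle} = 1$, using that $\langle c \rangle = \E_{(x,y,c)\sim P}[c]$ by its very definition. Since importance weights satisfy $c \ge 0$, the density $\frac{c}{\langle c \rangle}$ is nonnegative, so $P'$ is a valid distribution whenever $\langle c \rangle > 0$.

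With $P'$ established, the identity follows by substituting its definition into the right-hand side and cancelling:
\[
\langle c \rangle \, \E_{(x,y,c)\sim P'}[\I(f(x)\neq y)]
= \langle c \rangle \int \I(f(x)\neq y)\, \frac{c}{\langle c \rangle}\, dP(x,y,c)
= \E_{(x,y,c)\sim P}[c \cdot \I(f(x)\neq y)].
\]
The factor $\langle c \rangle$ cancels and the weight $c$ reattaches to the integrand, recovering the left-hand side exactly. This step is purely mechanical, requiring only linearity of expectation together with the Radon--Nikodym reweighting, and it works identically for discrete and continuous $P$.

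The only genuine subtlety is the degenerate case $\langle c \rangle = 0$, which I would dispatch first. When the expected weight is zero and weights are nonnegative, $c = 0$ holds $P$-almost surely, so the left-hand side $\E_{(x,y,c)\sim P}[c \cdot \I(f(x)\neq y)]$ equals $0$ for every classifier $f$; adopting the convention that both sides are $0$ in this case (with $P'$ left unspecified) makes the identity hold trivially. After handling this, I assume $\langle c \rangle > 0$ and invoke the two steps above. I do not expect any real obstacle here, as the theorem is a reweighting identity rather than a quantitative bound.
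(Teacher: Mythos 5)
Your proof is correct: the paper itself states this result as a folk theorem imported from the costing reference without proof, and your change-of-measure argument (normalization check, substitution and cancellation, plus the degenerate $\langle c \rangle = 0$ case) is exactly the standard derivation used there. Nothing is missing.
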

\noindent

Thus choosing $f$
to minimize the error rate under $P'$ is equivalent to
choosing $f$ to minimize the expected cost under $P$.
The Costing~\cite{costing} reduction 
uses rejection sampling according to the weights to
draw examples from $P'$ given examples drawn from $P$.

\noindent
The remainder of this section proves Theorems~\ref{main} and \ref{bound}.

\medskip\noindent
{\bf Proof of Theorem~\ref{main}:}~~
It is sufficient to prove the claim for any $x\in X$ because that
implies that the result holds for all expectations over $x$.

Conditioned on the value of $x$, each label $y$ has a distribution
over costs with an expected value of $\E_{{c} \sim D\mid x} [c_y]$.
The zero regret cost-sensitive classifier predicts according to $\arg
\min_y \E_{{c} \sim D\mid x} [c_y]$.  Suppose that
$T(f)$ predicts $y'$ on $x$, inducing cost-sensitive regret
\[
\wreg(y',D\mid x) = \E_{{c} \sim D\mid x} [c_{y'}] -
\min_y \E_{\vec{c} \sim D\mid x} [c_y].
\]
\noindent
First, we 
show that the sum over the binary problems of the importance weighted
regret is at least $\wreg(y',D\mid x)$, using induction starting at the leaves.
The induction hypothesis is that the sum of the regrets of importance-weighted
binary classifiers in any subtree bounds the regret of the subtree output.

For node $n$, each importance weighted binary decision between class
$a$ and class $b$ has an importance weighted regret which is either
$0$ or
$
r_n = |\E_{\vec{c} \sim D|x} [c_a - c_b] |  
= |\E_{\vec{c} \sim D|x} [c_a] - \E_{\vec{c} \sim D|x} [c_b] |,
$
depending on whether the prediction is correct or not.

Assume without loss of generality that the predictor outputs
class $b$.
The regret of the subtree $T_n$ rooted at $n$ is given by
\[r_{T_n} = \E_{\vec{c} \sim D|x} [c_b] - \min_{y \in \leaves(T_n)}
\E_{\vec{c} \sim D|x} [c_y].
\]

As a base case, the inductive hypothesis is trivially satisfied for
trees with one label.  
Inductively, assume that 
$\sum_{n'\in L}r_{n'}\geq r_L$ and $\sum_{n'\in R}r_{n'}\geq r_R$
for the left subtree $L$ of $n$ (providing $a$) and the right subtree 
$R$ (providing $b$).

There are two possibilities.  Either the
minimizer comes from the leaves of $L$ or the leaves of $R$.  The
second possibility is easy since we have
\begin{align*}
r_{T_n} &= \E_{\vec{c} \sim D|x} [c_b] - \min_{y \in
\leaves(R)} \E_{\vec{c} \sim D|x} [c_y]  
 = r_R \leq \sum_{n' \in R} r_{n'} \leq \sum_{n' \in {T_n}} r_{n'}, 
\end{align*}
proving the induction.

For the first possibility, 
\begin{align*}
r_{T_n} &= \E_{\vec{c} \sim D|x} [c_b] - \min_{y \in
\leaves(L)} \E_{\vec{c} \sim D|x} [c_y]  \\
& = \E_{\vec{c} \sim D|x} [c_b] - \E_{\vec{c} \sim D|x} [c_a] + 
\E_{\vec{c} \sim D|x} [c_a] 
 - \min_{y \in \leaves(L)} \E_{\vec{c} \sim D|x} [c_y]  \\
&= \E_{\vec{c} \sim D|x} [c_b] - \E_{\vec{c} \sim D|x} [c_a] + r_L \\
& \leq r_n + \sum_{n' \in L} r_{n'} 
\leq \sum_{n' \in {T_n}} r_{n'},
\end{align*}
which completes the induction.  The inductive hypothesis for the root
is that $ \wreg(y',D|x) \leq \sum_{n \in T} r_n$.

Using the folk theorem from~\cite{costing} (Theorem~\ref{th:translate} 
in this paper), each $r_n$ is bounded by
\[
r_n \leq W_n \reg(f_n,D'_n).
\]
Plugging this in and using Definition~(\ref{abr}), we get the theorem.
\qedblob

\bigskip\noindent
The proof of Theorem~\ref{bound} makes use of the following lemma.
Consider a filter tree $T$ on $k$ labels, 
evaluated on a cost-sensitive multiclass
example with cost vector $c\in [0,1]^k$. 
Let $S_T$ be the sum of importances over 
all nodes in $T$, and $I_T$ be the sum of importances over the nodes 
where the class with the larger cost was selected for the next round.
Let $c_T$ denote the cost of the winner chosen by $T$.

\begin{lemma} For any $c\in [0,1]^k$,
$S_{T}+c_{T}\leq I_{T}+\frac{k}{2}$.
\label{claim1}
\end{lemma}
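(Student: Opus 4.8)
The plan is to prove the algebraically equivalent statement $S_T - I_T + c_T \le k/2$, i.e.\ that the importance accumulated only at the nodes that forwarded the \emph{cheaper} of their two inputs, plus the cost of the final winner, never exceeds $k/2$. I would run an induction from the leaves toward the root, as in the proof of Theorem~\ref{main}, with the hypothesis that for every subtree $T'$ with $m$ leaves,
\[
S_{T'}+c_{T'}\le I_{T'}+\tfrac{m}{2}.
\]
Taking $T'=T$ and $m=k$ then yields the lemma.

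The engine of the induction is one small identity. At a node $n$ comparing the two subtree winners $A$ and $B$, the importance is $w_n=|c_A-c_B|=\max(c_A,c_B)-\min(c_A,c_B)$, and the cost forwarded by $n$ is $\min(c_A,c_B)$ when the cheaper input is kept (so $n$ adds $w_n$ to $S$ but nothing to $I$) and $\max(c_A,c_B)$ when the costlier one is kept (so $n$ adds $w_n$ to both $S$ and $I$). Writing $L,R$ for the child subtrees with $m_L+m_R=m$, both cases collapse to the single requirement
\[
S_L+S_R+\max(c_A,c_B)\le I_L+I_R+\tfrac{m}{2}.
\]
Adding the two hypotheses, which read $S_L\le I_L+m_L/2-c_A$ and $S_R\le I_R+m_R/2-c_B$ once we note the costs forwarded by $L,R$ are $c_A,c_B$, and substituting $\max(c_A,c_B)-c_A-c_B=-\min(c_A,c_B)$, shows the required inequality holds with slack $\min(c_A,c_B)\ge 0$; reassuringly it closes identically whether or not $n$ makes the costly (``wrong'') choice, since a wrong choice is charged symmetrically to $I_{T_n}$ on the larger side.

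Since the inductive step is essentially forced, I expect the base case to be the only real obstacle. A lone leaf has $S=I=0$ and $c=c_i$, so the hypothesis would demand $c_i\le 1/2$, which is false for $c_i\in(1/2,1]$; the induction simply cannot be anchored at individual leaves. I would instead anchor it at the lowest comparison nodes, where a sibling pair $a,b$ gives $S_{T'}+c_{T'}=\max(c_a,c_b)\le 1=2/2$ irrespective of the decision. Making this legitimate uses that the tree pairs its leaves at the bottom level, so every leaf enters through such a node and the isolated-leaf case never arises; intuitively the whole bound comes from the $k/2$ leaf pairs, each spending at most one unit of cost through a single $\max(\cdot,\cdot)\le 1$, while every higher-level $\min$ term only slackens the inequality. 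The one point I would check carefully is any level where the tree is not perfectly full, since an internal node with a single leaf child is precisely the configuration the base case is unable to absorb.
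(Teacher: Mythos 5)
Your proof is correct and follows essentially the same route as the paper's: the paper inducts on subtrees with the same hypothesis ($m/2$ for an $m$-leaf subtree), takes $k=2$ as the "immediate" base case, and its two cases ($c_r\ge c_l$ and $c_r<c_l$) carry out exactly the algebra you collapse into the single inequality $S_L+S_R+\max(c_l,c_r)\le I_L+I_R+\tfrac{m}{2}$, closing with the same slack $\min(c_l,c_r)\ge 0$. The caveat you flag about a leaf whose sibling is an internal node is genuine --- e.g.\ for $k=3$ with a caterpillar tree, costs $(1,1,0)$, and all decisions correct one gets $S_T+c_T=2>3/2$, so the lemma as stated needs leaves paired at the bottom --- and the paper glosses over this here, patching it only inside the proof of Theorem~\ref{bound} by splitting an odd class into two equal-cost copies, which leaves $S_T$, $I_T$, and $c_T$ unchanged.
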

\begin{proof}
The inequality follows by induction, the result being immediate when $k=2$.
Assume that the claim holds for the two subtrees, $L$ and $R$,
providing their respective inputs $l$ and $r$ to the root of $T$,
and $T$ outputs $r$ without loss of generality.
Using the inductive hypotheses for $L$ and $R$, we get
$S_T + c_T = S_L + S_R + |c_r - c_l| + c_r
\leq I_L + I_R + \frac{k}{2} - c_l + |c_r - c_l|$.

If $c_r \geq c_l$, we have $I_T = I_L + I_R + (c_r -c_l)$, and
\[
S_T + c_T \leq I_T + \frac{k}{2} -c_l \leq
I_T + \frac{k}{2},\] as desired.  If $c_r < c_l$, we have
$I_T = I_L + I_R$ and
$S_T + c_T \leq I_T + \frac{k}{2} - c_r \leq I_T + \frac{k}{2}$,
completing the proof.
\quad
\end{proof}

\bigskip\noindent
{\bf Proof of Theorem~\ref{bound}:}~~
Fix $(x,{c})\in \cX \times [0,1]^k$ and
take the expectation over the draw of $(x,{c})$ from $D$ as the last step.

Consider a filter tree $T$ evaluated on $(x,{c})$ using a given
binary classifier $f$.  As before, let $S_T$ be the sum of importances
over all nodes in $T$, and $I_T$ be the sum of importances over the
nodes where $f$ made a mistake.  Recall that the regret of $T$ on
$(x,{c})$, denoted in the proof by $\mbox{reg}_T$, is the difference
between the cost of the tree's output and the smallest cost $c^*$.
The importance-weighted binary regret of $f$ on $(x,{c})$ is simply
$I_T/S_T$.  Since the expected importance is upper bounded by 1,
$I_T/S_T$ also bounds the binary regret of $f$.

The inequality we need to prove is $\mbox{reg}_{T}S_{T}\leq\frac{k}{2}I_{T}$.
The proof is by induction on $k$, the result being trivial if $k=2$.
Assume that the assertion holds for the two subtrees, $L$ and $R$,
providing their respective inputs $l$ and $r$ to the root of $T$.
(The number of classes in $L$ and $R$ can be taken to be even, by
splitting the odd class into two classes with the same cost as 
the split class, which has no effect on the quantities 
in the theorem statement.)

Let the best cost $c^*$ be in the left subtree $L$.
%
Suppose first ({\bf Case 1}) that $T$ chooses $r$ and $c_r>c_l$.  Let $w = c_r-c_l$.
We have $\mbox{reg}_L = c_l-c^*$ and $\mbox{reg}_T = c_r-c^* = \mbox{reg}_L + w$.
The left hand side of the inequality is thus
\begin{align*}
\mbox{reg}_TS_T &= (\mbox{reg}_{L}+w)(S_{R}+S_{L}+w) \\ 
& = w(\mbox{reg}_{L}+S_{R}+S_{L}+w)+\mbox{reg}_L(S_L+S_R) \\
& \leq w(\mbox{reg}_{L}+I_{R}+I_L -c_{r}-c_{l}+w+\frac{k}{2}) 
  +\mbox{reg}_L (I_R + I_L - c_l - c_r + \frac{k}{2})\\
& \leq \frac{k}{2}w + I_R(w+\mbox{reg}_L) + I_L(w+\mbox{reg}_L) 
+\mbox{reg}_L\left(\frac{k}{2} - c_r - c_l\right)\\
& \leq \frac{k}{2}w + I_R(w+\mbox{reg}_L) + I_L\left(w+\mbox{reg}_L + \frac{k}{2} - c_r - c_l\right)\\
& \leq \frac{k}{2}w + I_R(w+\mbox{reg}_L) + \frac{k}{2} I_L
\leq \frac{k}{2}(w + I_R + I_L) = \frac{k}{2} I_T.
\end{align*}
The first inequality follows from lemma \ref{claim1}.  The second and
fourth follow from $w(\mbox{reg}_L -c_l - c_r + w) \leq 0$.  The third
follows from $\mbox{reg}_L \leq I_L$.  The last follows from
$\mbox{reg}_T \leq \frac{k}{2}$ for $k\geq 2$.

The proofs for the remaining three cases 
($c_T = c_l < c_r$, $c_T = c_l > c_r$, and $c_l > c_r = c_T$) 
use the same machinery as the proof above.

\medskip\noindent
{\bf Case~2}:~ $T$ outputs $l$, and $c_l < c_r$.
In this case $\mbox{reg}_T = \mbox{reg}_L = c_l - c^*$. 
The left hand side can be rewritten as
\begin{align*}
\mbox{reg}_TS_T &= \mbox{reg}_{L}(S_{R}+S_{L}+c_r-c_l) 
= \mbox{reg}_{L}S_{L}+\mbox{reg}_{L}(S_{R}+c_r-c_l) \\
&\leq \mbox{reg}_L \left( I_L + I_R - 2c_l +\frac{k}{2} \right) 
\leq I_R + \mbox{reg}_L \left( I_L - 2c_l +\frac{k}{2} \right) \\
&\leq I_R + I_L \left( \mbox{reg}_L - 2c_l +\frac{k}{2} \right) 
\leq I_R + \frac{k}{2}I_L \leq \frac{k}{2}I_T.
\end{align*}
The first inequality follows from the lemma, the second from $\mbox{reg}_L
\leq 1$, the third from $\mbox{reg}_L \leq I_L$, the fourth from $-c_L
- c^* < 0$, and the fifth because $I_T = I_L + I_R$.

\medskip\noindent
{\bf Case~3}:~ $T$ outputs $l$, and $c_l > c_r$.
We have $\mbox{reg}_T = \mbox{reg}_L = c_l - c^*$.  
The left hand side can be written as
\begin{align*}
\mbox{reg}_TS_T &= \mbox{reg}_{L}(S_{R}+S_{L}+c_l-c_r) \\
& \leq \frac{|L|}{2}I_L + \mbox{reg}_{L} \left( I_R + \frac{k-|L|}{2}-c_r +c_l -c_r \right) \\
&\leq
\frac{k}{2}I_L + I_R + (c_l - 2c_r) 
\leq \frac{k}{2}(I_L + I_R + (c_l - c_r)) = \frac{k}{2}I_T,
\end{align*}
The first inequality follows from the inductive hypothesis and the
lemma, the second from $\mbox{reg}_L < 1$ and $\mbox{reg}_L < I_L$,
and the third from $c_r > 0$ and $k/2 > 1$.

\medskip\noindent
{\bf Case~4}:~ $T$ outputs $r$, and $c_l > c_r$.
Let $w = c_l-c_r$.  We have $\mbox{reg}_T = c_r - c^* = \mbox{reg}_L - w$. 
The left hand side can be written as
\begin{align*}
\mbox{reg}_TS_T &= (\mbox{reg}_{L}-w)(S_{R}+S_{L}+w) \\
&= \mbox{reg}_LS_L - wS_L + (\mbox{reg}_L-w)(S_R + w)\\
&\leq
\frac{|L|}{2}I_L - w \left( I_L + \frac{|L|}{2} -c_l \right) + 
(\mbox{reg}_L - w)\left(I_R + c_l-2c_r +\frac{k-|L|}{2} \right) \\
&\leq
\frac{|L|}{2}I_L - w \left( I_L + \frac{|L|}{2} -c_l \right) + 
(I_L - w)\frac{k-|L|}{2} \\
 & \qquad\quad + (\mbox{reg}_L - w)\left(I_R + c_l-2c_r \right) \\
&\leq 
\frac{k}{2}(I_L + I_R) - w\frac{k}{2} - w(I_L - c_l) 
 + (\mbox{reg}_L-w)(c_l-2c_r).
\end{align*}
The first inequality follows from the inductive hypothesis and the
lemma, the second from $\mbox{reg}_L \leq I_L$, and the third from
$\mbox{reg}_L \leq \frac{k}{2}$.

The last three terms are upper bounded by 
$
-w -w\mbox{reg}_L + wc_l + \mbox{reg}_Lc_l -2c_r\mbox{reg}_L -wc_l +2wc_r 
\leq -w - \mbox{reg}_L(c_r+c_l) + \mbox{reg}_Lc_l +2wc_r 
\leq -w -(c_l-c^*)c_r + wc_r +(c_l-c_r)c_r \leq 0,
$
\shrink{
\begin{align*}
-w &-w\mbox{reg}_L + wc_l + \mbox{reg}_Lc_l -2c_r\mbox{reg}_L -wc_l +2wc_r \\
&\leq -w - \mbox{reg}_L(c_r+c_l) + \mbox{reg}_Lc_l +2wc_r \\
&\leq -w -(c_l-c^*)c_r + wc_r +(c_l-c_r)c_r \leq 0,
\end{align*}
}
and thus can be ignored, yielding
$
\mbox{reg}_TS_T \leq \frac{k}{2}(I_L + I_R) = \frac{k}{2}I_T
$,
which completes the proof.
Taking the expectation over $(x,{c})$ completes the proof.
\qedblob

\subsection{Tightness of Theorem~\ref{bound}}\label{example}
The following simple example shows that the theorem is 
essentially tight.
Let $k$ be a power of two, and let every label have cost 0 if it is
is even, and 1 otherwise. The tree structure 
is a complete binary tree of depth $\log k$ with the nodes being paired 
in the order of their labels.
Suppose that all pairwise classifications are correct, except that class
$k$ wins all its $\log k$ games leading to cost-sensitive multiclass regret 1.
We have $\mbox{reg}_T=1$, 
$S_T=\frac{k}{2}+\log k -1$, and $I_T = \log k$, leading to
the regret ratio 
$
{\mbox{reg}_TS_T}/{I_T} =\Omega(\frac{k}{2\log k}),
$
almost matching the theorem's bound of 
$\frac{k}{2}$ on the ratio.

\subsection{Experimental Results}\label{S:experiments}
There is a variant of the Filter Tree algorithm, which has a
significant difference in performance in practice.  Every
classification at any node $n$ is essentially between two labels
computed at test time, implying that we could simply learn one
classifier for every pair of labels that could reach $n$ at test time.
(Note that a given pair of labels can be compared only at a single
node, namely their least common ancestor in the tree.)  The
conditioning process and the tree structure gives us a better analysis
than is achievable with the All-Pairs approach~\cite{All-Pairs}.  This
variant uses more computation and requires more data but often
maximizes performance when the form of the classifier is constrained.

We compared the performance of Filter Tree and its All-Pairs variant
described above to the performance of All-Pairs and the Tree
reduction, on a number of publicly available multiclass
datasets~\cite{UCI}.  Some datasets came with a standard training/test split:
\texttt{isolet} (isolated letter speech recognition),
\texttt{optdigits} (optical handwritten digit recognition),
\texttt{pendigits}
(pen-based handwritten digit recognition),
\texttt{satimage}, and \texttt{soybean}.
For all other datasets, we reported the average
result over 10 random splits, with $2/3$ of the dataset used for training and
$1/3$ for testing. (The splits were the same for all methods.)

\begin{figure}[t]
\includegraphics[angle=270,width=.5\textwidth]{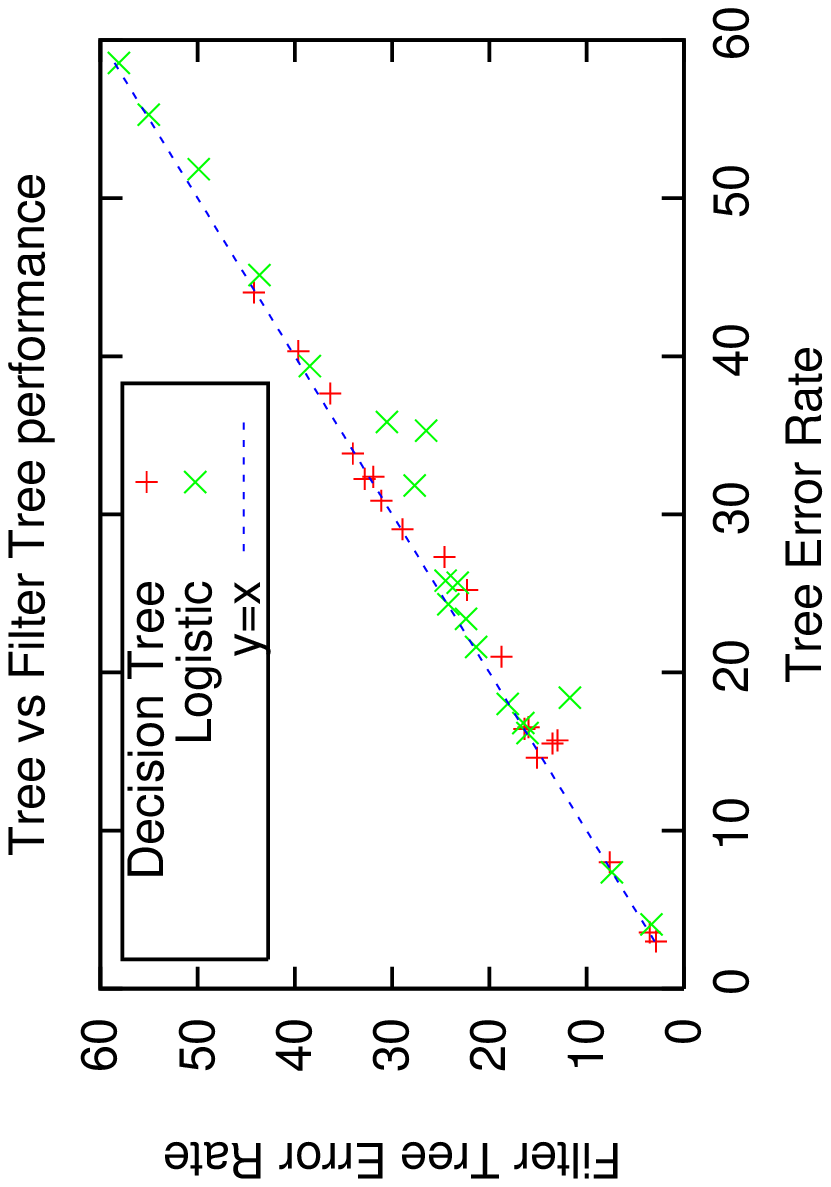}
\includegraphics[angle=270,width=.5\textwidth]{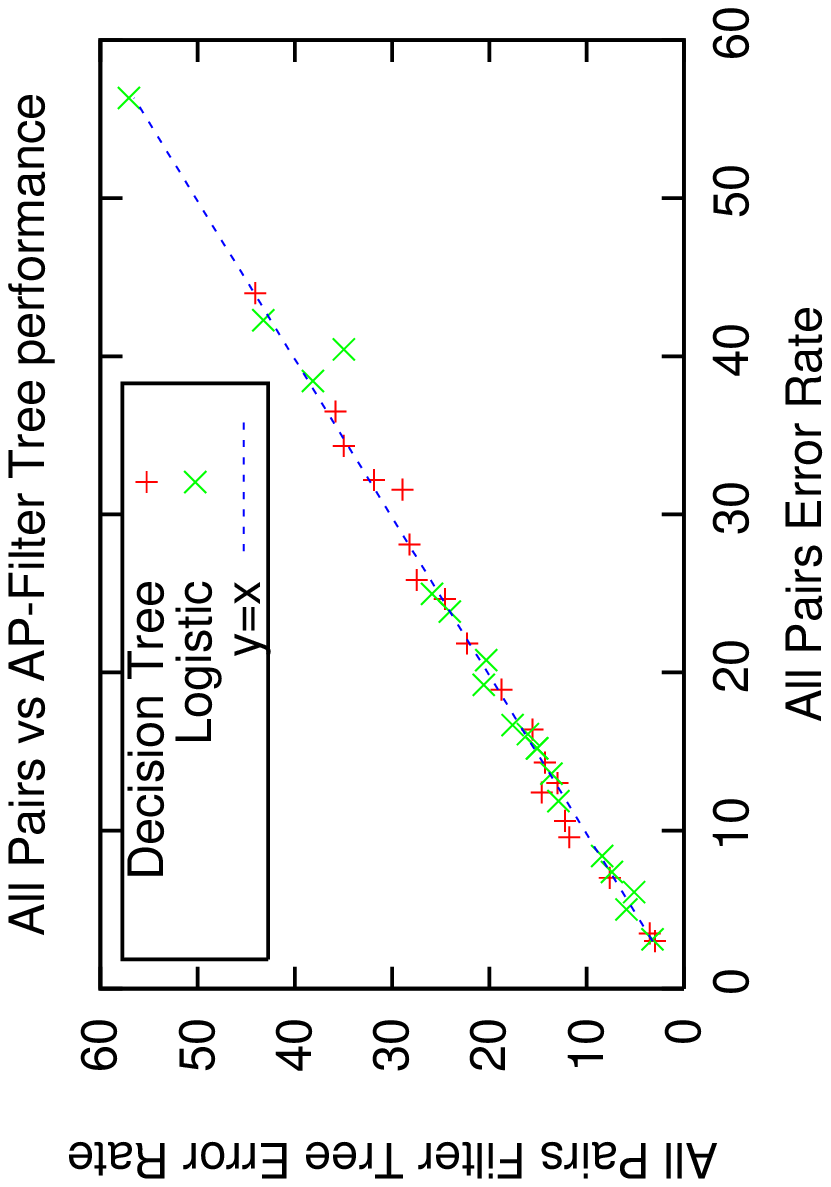}
\caption{\label{fig:comparison} Error rates (in \%) 
of \texttt{Tree} versus
\texttt{Filter-Tree} (top) and 
{All-Pairs} versus All-Pairs Filter Tree (top)
on several different datasets with
a decision tree or logistic regression classifier.}
\end{figure}

If computation is constrained and we can afford only $O(\log k)$
computation per multiclass prediction, the Filter Tree dominates the
Tree reduction, as shown in Figure~\ref{fig:comparison}.

If computation is relatively unconstrained, All-Pairs and the
All-Pairs Filter Tree are reasonable choices.  The comparison in
Figure~\ref{fig:comparison} shows that there the All-Pairs Filter Tree
yields similar prediction performance while using only $O(k)$
computation instead of $O(k^2)$.

Test error rates using decision trees (J48) and logistic regression as
binary classifier learners are reported in Table~\ref{T1}, using 
Weka's implementation with default parameters~\cite{weka}.
The lowest error rate in each row is shown in bold, although
in some cases the difference is insignificant.

\section{Error-Correcting Tournaments}
\label{sec:Multi-Elimination}
In this section, we extend filter trees to 
$m$-elimination tournaments, also called $(m-1)$-error-correcting tournaments.
As this section builds on 
Sections~\ref{S:algorithm} and \ref{S:analysis}, understanding them is required 
before reading this section.  For simplicity, we work with only the
multiclass case.  An extension for cost-sensitive multiclass problems
is possible using the importance weighting techniques of the previous
section.

\subsection{Algorithm Description}
An {\it $m$-elimination tournament} operates in two phases. 

The first phase consists of $m$
single-elimination tournaments over the $k$ labels where a label is
paired against another label at most once per round.  Consequently,
only one of these single elimination tournaments has a simple binary
tree structure; see, for example, Figure~\ref{fig:me_phase1} for an
$m=3$ elimination tournament on $k=8$ labels.  There is substantial
freedom in how the pairings of the first phase are done; our
bounds depend on the depth of any mechanism which 
pairs labels in $m$ distinct single elimination tournaments.  One such
explicit mechanism is given in~\cite{Min_find}.  Note that once an
example has lost $m$ times, it is eliminated and no longer
influences training at the nodes closer to the root.

The second phase is a final elimination phase, where we select the
winner from the $m$ winners of the first phase. It consists of a
redundant single-elimination tournament, where the degree of
redundancy increases as the root is approached.  To quantify the
redundancy, let every subtree $Q$ have a \emph{charge} $c_Q$ equal to
the number of leaves under the subtree.  First phase winners at the
leaves of the final elimination tournament have charge $1$.  For any
non-leaf node comparing the outputs of subtrees $A$ and $B$, the importance
weight of a binary example created at the node is set to either $c_A$ or $c_B$,
depending on whether the label comes from $B$ or $A$. 
In tournament applications, an importance weight can be
expressed by playing games repeatedly where the winner of $A$ must
beat the winner of $B$ $c_B$ times to advance, and vice versa.
When the two labels compared are
the same, the importance weight is set to $0$,
indicating there is no preference in the pairing amongst the two
choices.  

\begin{figure}
\begin{center}
\includegraphics[angle=0,width=.35\textwidth]{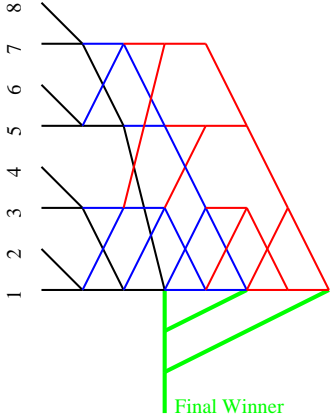}
\end{center}
\caption{\label{fig:me_phase1} An example of a $3$-elimination
  tournament on $k=8$ players.  There are $m=3$ distinct single
  elimination tournaments in first phase---one in black, one in blue,
  and one in red.  After that, a final elimination phase occurs over
  the three winners of the first phase.  The final elimination
  tournament has an extra weighting on the nodes, detailed in the text. }
\end{figure}

\subsection{Error Correcting Tournament Analysis}
A key concept throughout this section is the
{\it importance depth}, defined as the worst-case length
(number of games) of the overall tournament, where importance-weighted 
matches in the final elimination phase are played as repeated games.
In Theorem~\ref{thm:Importance-depth} we prove a bound on the
importance depth.

The computational bound per example is essentially just the importance
depth.

\begin{theorem}\label{thm:structural-depth} \emph{(Structural Depth Bound)} 
For any $m$-elimination tournament, the training and test computation
is $O(m + \ln k)$ per example.
\end{theorem}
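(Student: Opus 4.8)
The plan is to observe that, for a single example, both the training cost and the test cost are proportional to the number of tournament nodes (games) that the example is routed through, and then to bound this quantity by the \emph{structural depth} of the tournament, that is, its number of rounds. The key structural fact I would use is that no label is paired more than once per round, so the number of games any single label participates in is at most the number of rounds. It therefore suffices to show that an $m$-elimination tournament lays out in $O(m+\log k)$ rounds, and to check that the per-node work is $O(1)$.

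First I would bound the depth of the two phases separately. For the first phase, the $m$ single-elimination brackets are scheduled so that a label plays at most once per round; since a label must accumulate $m$ losses before elimination and can lose at most once per round it needs at least $m$ rounds, and since each bracket is a single-elimination tournament on at most $k$ labels it needs at least $\log k$ rounds, so both terms are forced. The matching upper bound of $O(m+\log k)$ rounds is precisely what the explicit pairing mechanism of~\cite{Min_find} delivers, and I would invoke it rather than rebuild the schedule. For the second phase, the final-elimination tree over the $m$ first-phase winners is a balanced binary tree on $m$ leaves, so its structural depth is $O(\log m)=O(\log k)$ over the whole parameter range; the importance weights on its nodes inflate only the number of repeated games (the importance depth), not the number of distinct classifiers on a root-to-leaf path. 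Summing the phases gives a structural depth of $O(m+\log k)$.

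With the depth bound in hand the test claim is immediate, and is just the Filter Tree descent applied twice. Evaluating $T(f)$ on $x$ starts at the root of the final-elimination tree, where each classifier $f_n(x)$ reports which of its two input subtrees supplies the winner; after $O(\log m)$ evaluations the descent reaches the leaf corresponding to the bracket containing the champion, and descending that one bracket costs a further $O(\log k)$ evaluations, for a total of $O(\log m+\log k)=O(m+\log k)$ with $O(1)$ work per node. For training I would route each example $(x,y)$ through exactly the games in which its label $y$ competes, emitting one (importance-weighted) binary example per game in the same manner as the Filter Tree training of Section~\ref{S:algorithm}. Because $y$ plays at most once per round and is dropped only after $m$ losses, it appears in at most $O(m+\log k)$ games, each contributing $O(1)$ work; here the $m$ term genuinely arises, since, unlike the single-elimination case, a label keeps generating training signal until it has lost $m$ times.

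The step I expect to be the main obstacle is the first-phase depth bound, namely that $m$ complete single-elimination tournaments on $k$ labels fit into $O(m+\log k)$ rounds with no label playing twice per round. This is the ``somewhat surprising'' scheduling fact flagged in the introduction, and rather than reconstruct the pairing combinatorics I would take it as given from the explicit construction of~\cite{Min_find}. Everything downstream, the per-example routing for training and the two-stage top-down descent for testing, is then routine counting against that depth.
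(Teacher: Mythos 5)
Your proposal is correct and follows essentially the same route as the paper: both arguments reduce the claim to a depth bound (the first-phase bound imported from~\cite{Min_find} via Lemma~\ref{lem:fpdb}, plus a second-phase bound) together with the observation that each label plays at most one game per round, so per-example work is proportional to the number of rounds. The only cosmetic difference is that the paper deduces the structural depth from the importance depth bound of Theorem~\ref{thm:Importance-depth} (since all importance weights are at least $1$), whereas you bound the second phase's structural depth directly by $O(\log m)$; both give $O(m+\ln k)$.
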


\begin{proof}
The proof is by simplification of the importance depth bound (theorem
\ref{thm:Importance-depth}), which bounds the sum of importance weights at
all nodes in the tournament.

To see that the importance depth controls the computation, first note that
the importance depth bounds the tournament depth since all importance weights
are at least 1.  At training time, any one example is used at most once
per tournament level starting at the leaves.  At testing time, an
unlabeled example can have its label determined by traversing the
structure from root to leaf.
\end{proof}

\subsection{Regret analysis}
Our regret theorem is the analogue of Corollary~\ref{cor:multi} for
error-correcting tournaments, and the notation is as defined there.
As in the previous section, the reduction transforms a multiclass
distribution $D$ into an induced distribution $D'$ 
over binary labeled examples.
As before, $T(f)$ denotes the multiclass classifier
induced by a given binary classifier $f$ and tournament structure $T$.

It is useful to have the notation $\left
\lceil m \right \rceil _2$ for the smallest power of
$2$ larger than or equal to $m$.

\begin{theorem}\label{multi-theorem} \emph{(Main Theorem)} 
\noindent
For all distributions $D$ over $k$-class examples, all binary classifiers $f$,
all $m$-elimination tournaments $T$, the ratio
of ${\reg(T(f),D)}$ to ${\reg(f,D')} $ is upper
bounded by
\[
\begin{cases}
2 + \frac{\lceil m \rceil_2}{m} + \frac{k}{2m} &\text{for all $m \geq 2$ and $k> 2$} \\
4 + \frac{2\ln k}{m}+2\sqrt{\frac{\ln k}{m}} &
\text{for all $k \leq 2^{62}$ and $m \leq 4\log_{2}k$} \\
\end{cases}
\]
\end{theorem}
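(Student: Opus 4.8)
The plan is to mirror the pointwise, conditional-on-$x$ strategy used for Theorems~\ref{main} and~\ref{bound}: fix $x$, write $p_y=\Pr[y\mid x]$, let $y^\star$ be a most likely label and $y'$ the tournament's output, so that the conditional multiclass regret is $\reg_T=p_{y^\star}-p_{y'}$. As in the proof of Theorem~\ref{bound} I would let $S_T$ be the total importance summed over all nodes (now including the phase-two charges) and $I_T$ the importance summed over the nodes at which $f$ errs, and aim for a pointwise inequality of the form $\reg_T\,S_T\le(\text{ratio})\,I_T$. Combined with the Translation Theorem~\ref{th:translate} (to pass from importance-weighted mistakes to binary regret) and the definition~(\ref{abr}) of the average binary regret, taking expectations over $x$ then yields the stated ratio. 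The importance-depth bound (Theorem~\ref{thm:Importance-depth}) supplies the control on $S_T$ and on the depth that the second case needs to be made quantitative.

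Because the tournament has two structurally different phases, I would not attempt a single induction but instead bound the two phases separately and compose. For the final elimination phase I would run a charge-weighted version of the filter-tree induction behind Theorem~\ref{main} and Lemma~\ref{claim1}: since a subtree of charge $c_Q$ forces the opposing winner to prevail $c_Q$ times, a worse label can advance only by accumulating binary regret proportional to the charge, and the charges telescope so that the phase-two contribution to the ratio is $2+\lceil m\rceil_2/m$, the $\lceil m\rceil_2$ arising from padding the $m$ first-phase winners into a balanced tree on the next power of two. For the first phase the key point is that $y^\star$ reaches the second phase unless it loses in all $m$ interleaved single-elimination brackets; each such loss is a classifier error whose regret is at least the eventual gap, so eliminating $y^\star$ costs binary regret at least $m$ times its advantage, which is exactly what divides the single-tournament accounting by $m$. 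Carrying the cost-sensitive worst-case bookkeeping of Theorem~\ref{bound} through this $m$-fold replication yields the phase-one contribution $k/(2m)$, and hence the first case.

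The second case is a refinement valid in the regime where one actually wants constant ratio at logarithmic depth, where the crude $k/(2m)$ accounting is useless. Here I would first bound the phase-two factor by $\lceil m\rceil_2/m\le 2$, turning the baseline $2+\lceil m\rceil_2/m$ into $4$, and then replace the phase-one term. Instead of the cost-sensitive worst case I would use the multiclass (0/1) depth argument behind Corollary~\ref{cor:multi}: at most one charged node per level is relevant to any fixed example, so $y^\star$'s elimination is governed by the bracket \emph{depth} rather than by $k$, replacing $k/(2m)$ by a $\Theta(\ln k/m)$ term. The residual $2\sqrt{\ln k/m}$ slack and the numerical ceiling $k\le 2^{62}$ should then fall out of optimizing the depth of the explicit first-phase pairing construction of~\cite{Min_find}: bounding the extra rounds needed to pack $m$ disjoint single-elimination tournaments into few rounds introduces a $\sqrt{\ln k/m}$ term, and the constants in that estimate are only clean enough to state below a fixed threshold on $k$.

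I expect the first-phase replication argument to be the main obstacle. The difficulty is that $y^\star$ may lose only to near-optimal labels whose individual per-game regrets are tiny, so one cannot simply charge the final gap $\reg_T$ against each of the $m$ losses; the bookkeeping must instead track, for each bracket, the regret of the best label still alive and show that the $m$ brackets' regrets add, all while the brackets share games and only one of them is a clean binary tree. Making this accounting interact correctly with the charge-weighting of the second phase — so that the phase-one and phase-two bounds compose without losing the $1/m$ factor — is the technically delicate step, and getting the second case's constants to close below $2^{62}$ is the fiddliest part of the computation.
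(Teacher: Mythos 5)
Your overall architecture is the paper's: work pointwise in $x$, bound each first-phase bracket by the filter-tree induction of Theorem~\ref{main}/Corollary~\ref{cor:multi} (tracking the regret of each subtree's winner rather than charging the final gap to each of $y^\star$'s losses --- you correctly identify both the obstacle and its fix), run a charge-weighted induction over the final-elimination tree, and control everything with the importance-depth bound. The paper's invariant is $c_Q r_a \le r_Q + \sum_{w\in \leaves(Q)} r_W$ for a final-phase subtree $Q$ won by $a$, whose base case is exactly the per-bracket bound $r_w \le r_W$; at the root this gives $m\,\reg(T(f),D\mid x) = c_F r_y \le \sum_n r_n \le d\,\reg(f,D'\mid x)$, so the \emph{entire} theorem reduces to the single inequality $\reg(T(f),D)/\reg(f,D') \le d/m$ with $d$ the importance depth, and the two cases of the statement are nothing but the two corresponding cases of Theorem~\ref{thm:Importance-depth}.

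The concrete gap is in your accounting for the first case. You assign $2+\lceil m\rceil_2/m$ to phase two and $k/(2m)$ to phase one, and propose to obtain the latter by pushing the cost-sensitive $k/2$ bookkeeping of Theorem~\ref{bound} and Lemma~\ref{claim1} through the $m$-fold replication. The paper never invokes Theorem~\ref{bound} here, and that route is not obviously viable: the natural phase-one bound is depth-based, and the first-phase depth is $\lceil k/2\rceil + 2m$ (Lemma~\ref{lem:fpdb}, the robust min-finding construction of~\cite{Min_find}), so phase one actually contributes $2+\lceil k/2\rceil/m$ to the ratio --- both the additive $2$ and the $k/(2m)$ live there --- while phase two contributes only $(\lceil m\rceil_2 -1)/m$ (Lemma~\ref{lem:spdb}, summing the charges $2^{i-1}$ up the final tree). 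Your decomposition sums to the right total only because the misplaced $2$ cancels; as a proof plan it would have you trying to establish a phase-one bound of $k/(2m)$ that the construction does not satisfy, and a phase-two bound looser than what you need. Your sketch of the second case is essentially right in spirit (bound $\lceil m\rceil_2/m\le 2$, then use the logarithmic first-phase depth), except that the $2\ln k/m + 2\sqrt{\ln k/m}$ term and the $k\le 2^{62}$ restriction come from the paper's own continuous-relaxation/Chernoff estimate of the first-phase depth, verified computationally, rather than from re-optimizing the construction of~\cite{Min_find}.
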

\noindent
The first case shows that a regret ratio of $3$ is achievable for very
large $m$.  The second case is the best bound for cases of common
interest.  For $m = 4 \ln k$ it gives a ratio of $5.5$.

\medskip
\begin{proof}
The proof holds for each input $x$, and hence in expectation over $x$.

Fix $x$, and let $p_y = D(y\mid x)$ for $y\in \{1,\ldots, k\}$.
We can define the regret of any label $y$ as $r_y = 
p^* - p_y$, where $p^*=\max_{a \in \{1 ,\cdots, k \}} p_a$.

The regret of a node $n$ comparing labels $a$ and $b$ from subtrees $A$ and $B$,
and outputting $a$, is \[r_n = c_B(p_b-p_a)_+,\] where we use the predicate
$(z)_+=\max(z,0)$.  Thus $r_n$ is 0 if $n$ outputs the more likely label.
If $n$ is in a first phase tournament, $r_n = (p_b-p_a)_+$.

Finally, the regret of a subtree $T$ is defined as $r_T =
\sum_{n\in T} r_n$.

The first part of the proof is by induction on the tree structure $F$
of the final phase.  The invariant for a subtree $Q$ of $F$ won by
label $a$ is
\[c_Q r_a \leq r_Q + \sum_{w \in \leaves(Q)} r_{W},\] 
where $w$ is
the winner of a first phase single-elimination tournament $W$.

When $Q$ is a leaf $w$ of $F$, we have $c_Q r_w = r_w \leq r_{W}$,
where the inequality is from Corollary~\ref{cor:multi} noting that the depth
of $W$ times the average regret over the nodes in $W$ is $r_W$.

Assume inductively that the hypothesis holds at node $n$ 
comparing labels $a$ and $b$ from subtrees $A$ and $B$,
and outputting $a$:
$
c_A r_a
\leq r_A + \sum_{w \in \leaves(A)} r_{W}$ 
and
$c_B r_b \leq r_B + \sum_{w \in \leaves(B)} r_{W}.$
We have
$r_Q + \sum_{w \in \leaves(Q)} r_{W} \geq r_n + c_Ar_a + c_Br_b$
by the inductive hypothesis.  

Now, there are two cases:  Either $p_b \leq p_a$, in which case 
$r_n=0$ and $c_Ar_a + c_Br_b \geq c_Ar_a + c_Br_a \geq C_Qr_a$, as desired.  
Or $p_b > p_a$, in which case $r_n = c_B(p_b-p_a)$ and thus
\begin{align*}
r_n + c_Ar_a + c_Br_b &= c_Bp_b-c_Bp_a + c_Ap^*-c_Ap_a + c_Bp^* -c_Bp_b \\
&= p^*c_Q -p_aC_Q = (p^*-p_a)C_Q = r_aC_Q,
\end{align*}
finishing the induction.

Finally, letting $y$ be the prediction of $T(f)$ on $x$,
\[
m \reg(T(f),D\mid x) = c_F r_{y}
\leq r_F + \sum_{w \in \leaves(F)} r_{W} 
\leq d \reg(f,D'\mid x)),
\]
where $d$ is the maximum importance depth.
Applying the importance depth
theorem~(Theorem \ref{thm:Importance-depth}) and algebra completes the proof.
\end{proof}

\bigskip\noindent
The depth bound follows from the following three lemmas.

\begin{lemma}\label{lem:fpdb} \emph{(First Phase Depth bound)} 
The importance depth of the first phase tournament is bounded by the
minimum of
\[
\begin{cases}
\lceil \log_2 k\rceil + m \lceil \log_2 (\lceil \log_2 k \rceil + 1)\rceil \\
1.5 \lceil \log_2 k \rceil + 3 m + 1 \\
\left\lceil \frac{k}{2} \right\rceil + 2 m \\
\text{For $k\leq2^{62}$ and $m\leq4\log_{2}k$,}\ 2(m-1)+\ln k+\sqrt{\ln k}\sqrt{\ln k+4(m-1)}.
\end{cases}
\]
\end{lemma}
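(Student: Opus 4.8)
The first thing to observe is that the first phase carries no importance weights above $1$: the charges $c_Q$ and the repeated games live only in the final elimination phase. Hence the importance depth of the first phase coincides with the plain round-depth of running the $m$ single-elimination tournaments, and the lemma is a purely scheduling statement. The plan is to exhibit, for each of the four listed quantities, a way of laying out $m$ distinct single-elimination tournaments on the $k$ labels in which no label is scheduled in two matches in the same round, and to bound the number of rounds it uses. Since the lemma asks only for the \emph{minimum} of the four, the four cases may be handled by four (possibly different) layouts, and the explicit pairing mechanism of~\cite{Min_find} is the natural starting point for all of them.

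For the two logarithmic-in-$k$ cases I would use a divide-and-merge recursion. Partition the labels into halves of sizes $\lceil k/2 \rceil$ and $\lfloor k/2 \rfloor$, recursively schedule all $m$ tournaments on each half, and then merge tournament by tournament: in the $j$-th tournament the winner of the $j$-th left sub-tournament meets the winner of the $j$-th right sub-tournament. The $\lceil \log_2 k\rceil$ term (and the $1.5\lceil \log_2 k \rceil$ term in the second case) is the accumulated cost of the $\log_2 k$ recursion levels, while the $m$-dependent term is the cost of deconflicting the merge matches. I expect the merges to be the main obstacle: the $m$ winners emerging from one half need not be distinct labels, so a single label may owe several merge matches, and these must be interleaved with the recursion still running at neighbouring levels without ever asking a label to play twice in one round. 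Controlling this is exactly where the two cases diverge—assigning each tournament a schedule offset drawn from a $\lceil\log_2(\lceil\log_2 k\rceil+1)\rceil$-bit deconfliction gadget (as in~\cite{Min_find}) costs only a $\log$-of-the-depth factor per tournament and yields the first bound, whereas a cruder but uniform interleaving that charges a constant number of rounds to each tournament and to each level accumulates the extra half-level per halving and yields the additive $3m+1$ of the second bound.

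The third bound, $\lceil k/2 \rceil + 2m$, I would obtain from a flat, non-recursive layout designed to make the dependence on $m$ as small as possible (coefficient $2$) at the price of an additive term linear in $k$. This is the relevant bound precisely when $m$ is large—for instance the $m = \Theta(k)$ setting the paper uses to reach regret ratio $3 + O(1/c)$ with depth $O(k)$—since its $2m$ beats the $3m$ of the second bound. The analysis here is a direct scheduling/counting argument on the $m$ bottom-layer matchings and the thinner upper layers, rather than an induction, so I expect it to be the most routine of the four.

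The fourth bound, restricted to $k \le 2^{62}$ and $m \le 4\log_2 k$, I would get by inserting a free integer parameter $s$ into the recursive layout—grouping the tournaments, or the recursion levels, into blocks of size $s$—so that the round count takes the shape $2(m-1) + \ln k + \frac{A\ln k}{s} + B s$ for suitable constants $A,B$ emerging from the construction. Optimizing over real $s$ by AM--GM turns $\frac{A\ln k}{s} + Bs$ into $2\sqrt{AB\ln k}$, which with those constants is exactly $\sqrt{\ln k}\,\sqrt{\ln k + 4(m-1)}$; the natural logarithm appears because the optimization is carried out over a continuous parameter. The two side conditions are what guarantee that the optimal $s$ rounds to a usable integer and that the pipelining slack absorbs the rounding, so the remaining obstacle for this case is the finite but delicate verification that integer rounding of $s$ never overshoots the stated bound, which is what confines the clean statement to $k \le 2^{62}$.
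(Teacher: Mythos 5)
Your opening observation is correct and matches the paper: all first-phase comparisons carry importance weight $1$ (the charges $c_Q$ live only in the final elimination phase), so the importance depth of the first phase is just its round count and the lemma is a pure scheduling statement. For the first three bounds the paper does essentially what you intend, but with far less machinery: it observes that any low-depth reliable-minimum-finding comparator network yields a valid first-phase schedule and simply cites the construction of \cite{Min_find}, which achieves exactly those three depths. Your divide-and-merge and flat layouts are plausible reconstructions of that construction, but as written they are plans rather than proofs: the claims that deconflicting the merges costs exactly $m\lceil\log_2(\lceil\log_2 k\rceil+1)\rceil$ rounds in one layout and $3m+1$ in another are asserted, not derived, and nothing in the sketch establishes that a label owing several merge matches can always be scheduled without playing twice in a round. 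Since the paper itself outsources this to a citation, this part is a difference of packaging more than of substance.

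The fourth case is where there is a genuine gap. The quantity $\sqrt{\ln k}\sqrt{\ln k+4(m-1)}=\sqrt{(\ln k)^2+4(m-1)\ln k}$ is at least $\ln k$, so it cannot arise as $2\sqrt{AB\ln k}$ from an AM--GM optimization of $\frac{A\ln k}{s}+Bs$ with constants $A,B$, which is only $O(\sqrt{\ln k})$; your proposed mechanism is structurally incapable of producing the stated term. The paper obtains it quite differently: in a continuous relaxation, a label that has lost $i-1$ times sits in tournament $i$, so after $d$ rounds the number of surviving labels is $\frac{k}{2^d}\sum_{i=1}^{m}\binom{d}{i-1}$, a scaled binomial tail; bounding this tail by a Chernoff bound and setting the result to $1$ gives $\ln k=2d\bigl(\frac{1}{2}-\frac{m-1}{d}\bigr)^2$, whose solution via the quadratic formula is exactly $d=2(m-1)+\ln k+\sqrt{(\ln k)^2+4(m-1)\ln k}$. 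Crucially, the paper is explicit that this relaxation is only heuristic --- the discretization (odd survivor counts, tournaments collapsing to a single label) can push the true depth either above or below the continuous estimate --- so the fourth bound is not proved analytically at all: it is verified by a computer program over all $k\le 2^{62}$ (discretized into powers of $2$, with a pessimistic choice of $k$ in each range) and $m\le 4\log_2 k$. That computational verification is the actual reason for the side conditions, not, as you suggest, integer rounding of an optimization parameter $s$. A correct writeup of this case needs either the Chernoff-plus-quadratic derivation together with the verification step, or an honest analytic control of the discretization error; your proposal supplies neither.
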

\begin{proof}
The depth of the first phase is bounded by the classical problem of robust
minimum finding with low depth.  The first three cases hold because any
such construction upper bounds the depth of an error-correcting
tournament, and one such construction has these
bounds~\cite{Min_find}.

For the fourth case, we construct the depth bound by analyzing a
continuous relaxation of the problem. 
The relaxation allows the number of labels
remaining in each single elimination tournament of the first phase to
be broken into fractions.  Relative to this version, the actual
problem has two important discretizations:
\begin{enumerate}
\item When a single-elimination tournament has only a single label remaining,
  it enters the next single elimination tournament.  This can have the
  effect of \emph{decreasing} the depth compared to the continuous
  relaxation.
\item When a single-elimination tournament has an odd number of labels
  remaining, the odd label does not play that round.  Thus the number
  of players does not quite halve, potentially \emph{increasing} the
  depth compared to the continuous relaxation.
\end{enumerate}
In the continuous version, tournament $i$ on round $d$ has $\frac{{d \choose
  {i-1}}k}{2^d}$ labels, where the first tournament corresponds to
$i=1$.  Consequently, the number of labels remaining in any of the
tournaments is $\frac{k}{2^d} \sum_{i=1}^{m} {d \choose {i-1}}.$ We
can get an estimate of the depth by finding the value of $d$ such that
this number is 1.

This value of $d$ can be found using the Chernoff bound.  The
probability that a coin with bias $1/2$ has $m-1$ or fewer heads in
$d$ coin flips is bounded by $m^{- 2d \left(\frac{1}{2} -
  \frac{m-1}{d} \right)^2 }$, and the probability that this occurs in
$k$ attempts is bounded by $k$ times that.  Setting this value to $1$,
we get $ \ln k = 2d \left(\frac{1}{2} - \frac{m-1}{d} \right)^2.  $
Solving the equation for $d$, gives $
d = 2(m-1) + \ln k + \sqrt{ 4(m-1) \ln k + (\ln k)^2 }$.
This last formula was verified computationally for $k < 2^{62}$ and $m
< 4 \log_2 k$ by discretizing $k$ into factors of $2$ and running a simple
program to keep track of the number of labels in each tournament at
each level.  For $k\in \{2^{l-1}+1, 2^l\}$, we used a pessimistic value
of $k = 2^{l-1} + 1$ in the above formula to compute the bound, and
compared it to the output of the program for $k=2^l$.\quad
\end{proof}

\begin{lemma}\label{lem:spdb} \emph{(Second Phase Depth Bound)} 
In any $m$-elimination tournament, the second phase has importance
depth at most $\left\lceil m \right\rceil_2 - 1 $ rounds for $m>1$.
\end{lemma}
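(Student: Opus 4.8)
The plan is to realize the final elimination phase as a balanced binary tree over the $m$ first-phase winners and to bound its importance depth by a recurrence in $m$. Recall the weighting: each first-phase winner sits at a leaf of charge $1$, a subtree $Q$ has charge $c_Q$ equal to its leaf count, and the match at a node comparing subtrees $A$ and $B$ is realized by repeated games in which the label advancing out of $A$ must win $c_B$ games to proceed (and symmetrically $c_A$ games for a label advancing out of $B$). Hence a single node consumes a number of rounds equal to the charge of the sibling of the advancing label, which in the worst case over which side advances is $\max(c_A,c_B)$.

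I would split the $m$ winners recursively into halves of sizes $\lceil m/2\rceil$ and $\lfloor m/2\rfloor$, so that the two subtrees at the root carry charges $c_L=\lceil m/2\rceil\ge c_R=\lfloor m/2\rfloor$. Writing $D(m)$ for the worst-case number of rounds of this second phase (with $D(1)=0$), note that the two subtrees involve disjoint label sets and so can be played in parallel with no label appearing twice in one round; since $D$ is increasing, both are resolved after $D(c_L)$ rounds. The root match cannot begin until both its inputs are known, and in the worst case its survivor comes from the smaller subtree $R$ and must therefore win $c_L=\lceil m/2\rceil$ games, adding $\lceil m/2\rceil$ rounds. This gives the recurrence
\[
D(m)\;\le\;D\!\left(\lceil m/2\rceil\right)+\lceil m/2\rceil.
\]

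It then remains to solve the recurrence, which I would do by induction on $m$, the base case being $D(1)=0\le\lceil 1\rceil_2-1$. The only arithmetic input is the identity $\lceil\lceil m/2\rceil\rceil_2=\tfrac12\lceil m\rceil_2$ for $m\ge 2$: writing $2^{t-1}<m\le 2^t$ so that $\lceil m\rceil_2=2^t$, one checks $2^{t-2}<\lceil m/2\rceil\le 2^{t-1}$, so the least power of two at least $\lceil m/2\rceil$ is $2^{t-1}$. Feeding the inductive hypothesis and $\lceil m/2\rceil\le 2^{t-1}$ into the recurrence yields
\[
D(m)\;\le\;\bigl(2^{t-1}-1\bigr)+2^{t-1}\;=\;2^t-1\;=\;\lceil m\rceil_2-1,
\]
completing the induction.

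The step I expect to be the crux is not the recurrence but justifying the per-node round count: one must argue that the repeated games realizing an importance weight are counted as a single block equal to the number of wins the advancing side needs, so that the root contributes at most $\lceil m/2\rceil$ rounds. If instead one modeled the node as a two-sided race to the respective thresholds $c_A$ and $c_B$, the root would cost $c_L+c_R-1$ rounds and the bound would fail already at $m=4$ (giving $4$ rather than $3$); so the fixed-block reading, together with the observation that sibling subtrees resolve in parallel and finish in time, is exactly what the argument turns on. The parallel-scheduling claim and the $\lceil\cdot\rceil_2$ arithmetic are then routine.
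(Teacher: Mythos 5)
Your proof is correct and follows essentially the same route as the paper: the paper bounds the importance weight of a round-$i$ comparison in the final phase by $2^{i-1}$ (the charge of a subtree at that level) and sums the geometric series to get $\lceil m\rceil_2-1$, and your recurrence $D(m)\le D(\lceil m/2\rceil)+\lceil m/2\rceil$ is just that same geometric sum unrolled as an induction. Your reading of the per-node cost as the single importance weight $c_A$ or $c_B$ (rather than a two-sided race) is the one the paper intends and uses.
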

\begin{proof}
When two labels are compared in round $i\geq 1$, the importance weight
of their comparison is at most $2^{i-1}$.  Thus we have
$ \sum_{i=1}^{\lceil \log_2 m \rceil - 1} 2^{i-1} + \lfloor m
\rfloor_2 = \lceil m \rceil_2 - 1$.
\end{proof}

\bigskip\noindent
Putting everything together gives the importance depth theorem.
\begin{theorem}\label{thm:Importance-depth} \emph{(Importance Depth Bound)} 
For all $m$-elimination tournaments, the importance depth is
upper bounded by
\[
\begin{cases}
\lceil \log_2 k\rceil + m \lceil \log_2 (\lceil \log_2 k \rceil + 1)\rceil + \lceil m \rceil_2 \\
1.5 \lceil \log_2 k \rceil + 3 m + \lceil m \rceil_2 \\
\left\lceil \frac{k}{2} \right\rceil + 2 m + \lceil m \rceil_2 \\
\text{For $k\leq2^{62}$ and $m\leq4\log_{2}k$,}\ 2m + \lceil m
  \rceil_2 + 2\ln k+2\sqrt{m\ln k}.
\end{cases}
\]
\end{theorem}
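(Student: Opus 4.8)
The plan is to realize the importance depth as the sum of the two phase depths and then simplify each of the four cases by a one-line substitution. First I would observe that running the tournament consists of running the entire first phase to completion and then running the final elimination phase, so the worst-case number of games of the overall tournament — which is exactly the importance depth — is bounded by the worst-case first-phase length plus the worst-case second-phase length (the latter counting each importance-weighted final-phase match as its weight in repeated games). This additive decomposition reduces the theorem to adding the two bounds already established: Lemma~\ref{lem:fpdb} for the first phase and Lemma~\ref{lem:spdb} for the second, the latter contributing $\lceil m \rceil_2 - 1$.

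With this in hand, the first three cases require essentially no work beyond the addition. Adding $\lceil m \rceil_2 - 1$ to the first three bounds of Lemma~\ref{lem:fpdb} reproduces the first three cases of the theorem up to the trivial relaxation of dropping the $-1$; in fact case two matches exactly, since the $+1$ carried by its first-phase bound cancels the $-1$ from the second phase. So I would simply record these three sums and note the relaxation.

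The only case needing a short computation is the fourth. Here the first-phase bound is $2(m-1)+\ln k+\sqrt{\ln k}\sqrt{\ln k + 4(m-1)}$, and after adding $\lceil m\rceil_2 - 1$ I must bound the result by $2m + \lceil m \rceil_2 + 2\ln k + 2\sqrt{m\ln k}$. Cancelling the $\lceil m\rceil_2$ and the linear-in-$m$ terms, this reduces to showing $\sqrt{\ln k}\sqrt{\ln k + 4(m-1)} \le \ln k + 2\sqrt{m\ln k} + 3$, for which the constant slack is far more than enough: by monotonicity $\sqrt{\ln k}\sqrt{\ln k + 4(m-1)} \le \sqrt{(\ln k)^2 + 4m\ln k}$, and squaring gives $(\ln k)^2 + 4m\ln k \le (\ln k + 2\sqrt{m\ln k})^2$, so the left side is already at most $\ln k + 2\sqrt{m\ln k}$. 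The range restriction $k \le 2^{62}$, $m \le 4\log_2 k$ is inherited from Lemma~\ref{lem:fpdb} and is not needed again for this algebraic step.

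I expect no serious obstacle: the theorem is essentially mechanical once the two lemmas are granted, since the genuinely hard work (the Chernoff-type estimate of the first-phase depth) lives inside Lemma~\ref{lem:fpdb}. The one point deserving care is the additive decomposition itself — arguing cleanly that a single root-to-leaf path accrues at most the first-phase worst case and then at most the second-phase worst case, so that the importance depth is truly the sum rather than something larger, and that expressing importance-weighted final-phase matches as repeated games keeps the longest path within that sum. Once additivity is established, each case closes by the substitution and the elementary estimate above.
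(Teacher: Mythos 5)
Your proposal is correct and follows essentially the same route as the paper: the importance depth is bounded by summing the first-phase bound of Lemma~\ref{lem:fpdb} and the second-phase bound of Lemma~\ref{lem:spdb}, with the fourth case closed by the same elementary estimate (the paper writes it as $\sqrt{\ln k+4(m-1)}\leq\sqrt{\ln k}+2\sqrt{m}$, which is your inequality after multiplying through by $\sqrt{\ln k}$) and the remaining cases following by dropping the leftover negative constants.
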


\begin{proof}
We simply add the depths of the
first and second phases from Lemmas~\ref{lem:fpdb} and
\ref{lem:spdb}.
For the last
case, we bound $\sqrt{\ln k+4(m-1)}
\leq \sqrt{\ln k} + 2\sqrt{m}$ and eliminate subtractions in Lemma
\ref{lem:spdb}.
\end{proof}

\section{Lower Bound}
\label{sec:LB}
All of our lower bounds hold for a somewhat more powerful adversary
which is more natural in a game playing tournament setting.  In
particular, we disallow reductions which use importance weighting on
examples, or equivalently, all importance weights are set to $1$.
Note that we can modify our upper bound to obey this constraint by
transforming final elimination comparisons with importance weight $i$
into $2 i -1$ repeated comparisons and use the majority vote.  This
modified construction has an importance depth which is at most $m$
larger implying the ratio of the adversary and the reduction's regret
increases by at most $1$.

The first lower bound says that for any reduction algorithm $B$, there
exists an adversary $A$ with the average per-round regret $r$ such
that $A$ can make $B$ incur regret $2r$ even if $B$ knows $r$ in
advance.  Thus an adversary who corrupts half of all outcomes can
force a maximally bad outcome.
In the bounds below, $f_B$ denotes the multiclass classifier induced
by a reduction $B$ using a binary classifier $f$.

\begin{theorem}
For any deterministic reduction $B$ from $k>2$ classification 
to binary classification,
there exists a choice of $D$ and $f$ such that
$\reg(f_B,D) 
\geq 2 \reg(f,B(D))$.
\end{theorem}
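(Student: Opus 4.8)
The plan is to argue pointwise: since the claim concerns a single distribution, I would take $D$ to be supported on a single feature value $x$, so that $D$ is effectively just a distribution over labels and $f$ is just an assignment of answers to the binary queries that $B$ issues on $x$. The crucial structural observation is that the induced prediction $f_B(x)$ depends only on $B$ and on $f$, \emph{not} on $D$: the reduction combines binary answers into a multiclass label without seeing the label distribution. This lets me fix the adversary's classifier first and then choose the distribution to punish it. I would also dispose of inconsistent reductions up front: if for some degenerate distribution $D^{(m)}$ with $p_m=1$ the $B(D^{(m)})$-optimal classifier does not drive $B$ to output $m$, then $\reg(f_B,D^{(m)})>0$ while $\reg(f,B(D^{(m)}))=0$ and the ratio is infinite, so the theorem holds trivially. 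Hence I may assume \textbf{WLOG} that $B$ is consistent in this sense.

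The construction itself is motivated by the divide-and-conquer example (Theorem in Section~\ref{tree}) and by the three-label tournament, where forcing the second-best label to win costs a single mis-comparison of regret equal to the \emph{entire} multiclass regret, yet this mis-comparison is averaged against at least one correctly answered comparison. Concretely, I would pick two labels $a\ne b$ and the single-point distribution with $p_a=\tfrac12+\epsilon$, $p_b=\tfrac12-\epsilon$, and $p_c=0$ for all remaining labels (using $k>2$ to guarantee the existence of such a third label $c$). I then choose $f$ to be a minimal modification of the $B(D)$-optimal classifier that flips $B$'s output from the true best label $a$ to $b$. By the pointwise independence above this is a well-defined binary classifier, and the induced multiclass regret is exactly $\reg(f_B,D\mid x)=p_a-p_b=2\epsilon$.

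It then remains to show the average binary regret is at most $\epsilon$, so that $\reg(f_B,D)=2\epsilon\ge 2\,\reg(f,B(D))$. The point is that under this near-tied distribution the \emph{only} binary problems with nonzero regret are those that effectively decide between an $a$-representative and a $b$-representative, and each such problem has regret exactly $|p_a-p_b|=2\epsilon$; any comparison that merely separates $a$ or $b$ from a zero-probability label can be answered correctly by $f$ for free. The decisive flip therefore contributes regret $2\epsilon$ on one problem, while the guaranteed presence of at least one further, correctly answered comparison (which $k>2$ supplies, exactly as label $3$ does in the three-label tree) dilutes the average to at most $\epsilon$.

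The main obstacle is making this dilution argument uniform over \emph{arbitrary} deterministic reductions $B$, which may be adaptive and need not be tournament-structured: I must rule out a ``robust'' $B$ that forces the adversary to corrupt many $a$-versus-$b$ comparisons (raising the numerator of the average) rather than a single decisive one. I expect to control this either by a structural argument—showing that along the forced decision path the reduction must resolve the $a$/$b$ distinction on a strict minority of its queries, so the flipped problems number at most half—or, more robustly, by a symmetrization argument that averages over the choice of the near-tied pair $(a,b)$ (equivalently, over label permutations applied to a fixed probability vector). The symmetrization removes any dependence on the internal bookkeeping of $B$: averaged over label identities, the fraction of binary problems whose optimal answer is sensitive to the $a\leftrightarrow b$ swap is at most one half, and a problem on which the adversary must deviate for one labeling is one on which it deviates for free under the reflected labeling. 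Since the expected ratio is then at least $2$, some labeling attains ratio at least $2$, yielding the desired $D$ and $f$.
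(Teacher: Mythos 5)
Your high-level framing (single point mass on one $x$, the observation that $f_B(x)$ depends only on $B$ and $f$ and not on $D$, and the disposal of inconsistent reductions via an infinite ratio) is sound and consistent with what the paper does implicitly. But the heart of your argument --- the ``dilution'' step showing the average binary regret is at most half the multiclass regret --- is exactly the part you leave open, and neither of your proposed remedies closes it. The order of quantifiers you chose is the source of the trouble: you fix the near-tied distribution ($p_a=\tfrac12+\epsilon$, $p_b=\tfrac12-\epsilon$) \emph{first} and then look for a minimal flip of the optimal classifier. Against that construction the ``strict minority'' structural claim is false: a reduction that asks the $a$-versus-$b$ question $N$ times (plus a constant number of consistency queries involving the other labels) and outputs the majority forces you to flip $\lfloor N/2\rfloor+1$ of the queries to change the winner, which for even $N$ and one padding query gives a ratio of $2(N+1)/(N+2)<2$. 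The symmetrization idea is too vague to evaluate; it is not clear why averaging over label permutations bounds the \emph{worst-case} fraction of queries whose answers must be corrupted, since the count of forced flips is a property of $B$'s decision logic, not of the labeling.

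The paper's proof inverts the quantifiers to avoid this entirely. The adversary picks $i$ and $j$, commits in advance to answers: every comparison involving exactly one of $i,j$ favors that label, and the $i$-versus-$j$ comparisons \emph{alternate} (decided by the parity of how many such comparisons have occurred so far). Only \emph{after} seeing which of $i$ or $j$ the reduction declares does the adversary choose $D$, putting all mass on the other one. Alternation guarantees that whichever candidate is declared, at most $\lfloor (c+1)/2\rfloor$ of the $c$ comparisons between $i$ and $j$ were answered against the true winner, while every other comparison has zero regret (all other labels have probability $0$); since each label plays at most once per round, the tournament has depth at least $c+1$, so the average binary regret is at most $\tfrac12$ against a multiclass regret of $1$ (not $2\epsilon$). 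This ``answer first, choose the distribution last'' device is precisely the missing ingredient that makes the bound uniform over arbitrary deterministic reductions, including the majority-vote ones that defeat your construction. As written, your proposal has a genuine gap at its central step.
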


\begin{proof}
The adversary $A$ picks any two labels $i$ and $j$. All comparisons 
involving $i$ but not $j$, are decided in favor of $i$.  Similarly for $j$.
The outcome of comparing $i$ and $j$ is determined by the
parity of the number of comparisons between $i$ and $j$ in some fixed
serialization of the algorithm.  If the parity is odd, $i$ wins; 
otherwise, $j$ wins.  The outcomes of all other comparisons are picked 
arbitrarily.

Suppose that the algorithm halts after some number of queries $c$
between $i$ and $j$.  If neither $i$ nor $j$ wins, the adversary can
simply assign probability $1/2$ to $i$ and $j$.  The adversary pays
nothing while the algorithm suffers loss 1, yielding a regret ratio of
$\infty$.

Assume without loss of generality that $i$ wins.  The depth of the
tournament is either $c$ or at least $c+1$, because each label can appear
at most once in any round.  If the depth is $c$, then since $k>2$,
some label is not involved in any query, and the adversary can set the
probability of that label to $1$ resulting in $\v(B)=\infty$.

Otherwise, $A$ can set the probability of label $j$ to be $1$ while
all others have probability $0$.  The total regret of $A$ is at most
$\lfloor \frac{c+1}{2} \rfloor$, while the regret of the winning label
is $1$.  Multiplying by the depth bound $c+1$, gives a regret ratio
of at least $2$.
\end{proof}

\bigskip\noindent
Note that the number of rounds in the above bound can depend on $A$.
Next, we show that for any algorithm $B$ taking the same number of
rounds for any adversary, there exists an adversary $A$ with a regret
of roughly one third, such that $A$ can make $B$ incur the maximal loss,
even if $B$ knows the power of the adversary.

\begin{lemma}
For any deterministic reduction $B$ to binary classification with
number of rounds independent of the query outcomes, there exists a
choice of $D$ and $f$ such that
$\reg(f_B,D) 
\geq (3 - \frac{2}{k}) \reg(f,{B}(D))$.
\end{lemma}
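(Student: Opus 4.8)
The plan is to argue pointwise in $x$ and reduce the statement to a purely combinatorial fact about a single tournament transcript. Fix $x$ and restrict attention to distributions of the form $p_{y^*}=1$, $p_y=0$ for $y\neq y^*$. Then outputting any $\hat y\neq y^*$ costs multiclass regret exactly $1$, while the induced cost difference $|c_a-c_b|$ is $0$ on every comparison not involving $y^*$ and $1$ on every comparison between $y^*$ and another label. Consequently the only comparisons that carry any weight in $\reg(f,B(D))$ are those in which $y^*$ participates, and the average binary regret equals $L_{y^*}/G_{y^*}$, where $G_{y^*}$ is the number of comparisons $y^*$ plays and $L_{y^*}$ the number it loses. (This is the same normalization that makes Corollary~\ref{cor:multi} tight for the filter tree.) Thus the ratio obtained by declaring $y^*$ to be the true label is exactly $G_{y^*}/L_{y^*}$, the reciprocal of $y^*$'s loss fraction, and the lemma is equivalent to: against any fixed-round reduction, the adversary can set outcomes so that, whatever $\hat y$ is output, some $y^*\neq\hat y$ has loss fraction at most $1/(3-2/k)$.

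First I would recover the factor $2$ by the two-label idea of the previous theorem: maintain two labels $a,b$ that defeat every other label, and split the head-to-head $a$--$b$ comparisons as evenly as possible, pinning the exact counts by the same parity argument as before. Whatever the reduction outputs, one of $a,b$ remains available as a refutation: if it outputs $a$, declaring $b$ gives $G_b/L_b=(g_{ab}+W_b)/(g_{ab}/2)=2+2W_b/g_{ab}$, where $g_{ab}$ is the number of $a$--$b$ games and $W_b$ counts $b$'s (winning) games against the remaining labels. The bare two-label construction only forces $W_b=0$, hence ratio $2$, matching the earlier bound.

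The improvement to $3-2/k$ comes from the fixed-round hypothesis, which forbids the reduction from adaptively inserting confirmation rounds after an upset: it must commit in advance to enough comparisons to rule out each of the other $k-2$ labels as the maximum. I would make this quantitative with a charging argument with two failure modes. If the reduction does not make $a$ and $b$ collectively beat the other labels often enough, then the adversary can keep some label $z$ with few losses (its mandatory losses to $a,b$ are too few and its games among the remaining labels can be won), and declaring $z$ already yields ratio exceeding $3-2/k$; otherwise the domination games are so numerous that $W_b$ is forced up to roughly $g_{ab}/2$, pushing $2+2W_b/g_{ab}$ to $3-2/k$. Summing every label's forced losses, using that the total number of losses equals the total number of comparisons, and excluding the single output label, is precisely where the $-2/k$ correction is produced.

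The main obstacle is the adaptivity together with the exact constant. Because the reduction chooses its pairings online as a function of the outcomes already revealed, the adversary cannot fix $a$, $b$, and the fallback $z$ in advance. I would instead choose the privileged labels adaptively (equivalently, average over a random choice of the two front-runners) and maintain, as a round-by-round invariant, a pair of co-front-runners whose loss fractions stay near $1/3$ regardless of how the reduction pairs players. Carrying this invariant through all of the committed rounds, and tightening the averaging so that the guaranteed loss fraction is $k/(3k-2)$ rather than merely bounded below $1/2$, is the delicate step; the two-label base case and the reciprocal-loss-fraction reformulation are the routine scaffolding around it.
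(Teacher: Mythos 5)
There is a genuine gap: the step you yourself flag as ``the delicate step'' --- maintaining two co-front-runners whose loss fractions stay near $1/3$ against an adaptive scheduler --- is essentially the whole content of the lemma, and your proposal does not supply it. Your two-label construction with evenly split head-to-head games only yields the ratio $2 + 2W_b/g_{ab}$, and nothing in the fixed-round hypothesis by itself forces $W_b$ up to $\left(\frac{1}{2}-\frac{1}{k}\right)g_{ab}$: the reduction is free to pair $a$ against $b$ round after round while the other $k-2$ labels play among themselves, and the dichotomy you sketch (either some fallback label $z$ has few losses, or the domination games are numerous) is never carried out, so it is not established that one of the two failure modes must fire with the right constant. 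The difficulty you correctly identify --- that the adversary cannot pre-commit to the privileged pair because the pairings are chosen online --- is real, but ``averaging over a random choice of the two front-runners'' is asserted, not shown, to preserve the invariant.

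The paper resolves this with a different and much simpler device: a \emph{temporal} split rather than a simultaneous pair. Knowing $q$ in advance (this is the only place the fixed-round hypothesis is used), set $r = qk/(3k-2)$. For the first $\frac{2(k-1)}{k}r$ rounds the adversary answers consistently with a single label, say label $1$, winning everything. Since each round contains at most $k/2$ comparisons, at most $(k-1)r$ losses are distributed among the other $k-1$ labels, so by pigeonhole some label $i\neq 1$ has at most $r$ losses; only \emph{then} is the second front-runner chosen. In the remaining $r$ rounds the adversary answers consistently with $i$ winning everything. Whichever of $\{1,i\}$ the reduction outputs, the adversary declares the other to be the true label: if the truth is $i$, the only lies are $i$'s at most $r$ first-stage losses; if the truth is $1$, the only lies are $1$'s at most $r$ second-stage losses (one per round). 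Either way the binary regret is at most $r/q$ while the multiclass regret is $1$, giving ratio $q/r = 3 - \frac{2}{k}$. This a posteriori pigeonhole selection of the second front-runner is the idea your proposal is missing; it sidesteps the parity bookkeeping and the co-front-runner invariant entirely. (Separately, your normalization of the binary regret by $G_{y^*}$, the number of games played by the declared best label, is the importance-weighted convention; the lower-bound section explicitly sets all importance weights to $1$ and normalizes by the number of rounds, so the statement you set out to prove is strictly harder than the one required.)
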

\begin{proof}
Let $B$ take $q$ rounds to determine the winner, for any set of query
outcomes. We will design an adversary $A$ 
with incurs regret $r = \frac{qk}{3k-2}$, such that 
$A$ can make $B$ incur the maximal loss of 1, even
if $B$ knows $r$. 

The adversary's query answering strategy is to answer consistently
with label $1$ winning for the first $\frac{2(k-1)}{k}r$ rounds,
breaking ties arbitrarily.  The total number of queries that $B$ can
ask during this stage is at most $(k-1)r$ since each label can play at
most once in every round, and each query occupies two labels.  Thus
the total amount of regret at this point is at most $(k-1)r$, and
there must exist a label $i$ other than label $k$ with at most $r$
losses.  In the remaining $q-\frac{2(k-1)}{n}r = r$ rounds, $A$
answers consistently with label $i$ and all other skills being 0.

Now if $B$ selects label $1$, $A$ can set $D(i\mid x)=1$ with $r/q$
average regret from the first stage.  If $B$ selects label $i$
instead, $A$ can choose that $D(1\mid x)=1$.  Since the number of queries
between labels $i$ and $k$ in the second stage is at most $r$, the
adversary can incurs average regret at most $r/q$.  If $B$ chooses any
other label to be the winner, the regret ratio is unbounded.
\end{proof}

\section*{References}

\appendix
\section{Table with Experimental Results}
\begin{table*}
\begin{small}
\centering
\begin{tabular}{|l|r|r|r|r||r|r|r|r||r|r|} \hline
& \multicolumn{4}{|c||}{J48} & \multicolumn{4}{|c||}{Logistic Regression} 
\\ \hline
{Dataset}$(k)$ & 
{Tree} & {FT} & {AP} & {APFT} &
{Tree} & {FT} & {AP} & {APFT} \\ \hline
   arrhythmia (13) & 37.64 & 36.37      & {\bf 34.32}  & 34.97            
    	      & 55.27 & 55.04 	   & 40.44 	  & {\bf 34.97} 
\\
   audiology (24)  & 32.37 & 31.93      & {\bf 28.08}  & 28.21 
              & 31.83 & 27.69 	   & {\bf 24.98}  & 25.90   
\\
   ecoli (8)     & 21.00 & {\bf 18.75}& 18.90        & {\bf 18.75}      
	      & 18.00 & 18.10 	   & 15.20  & {\bf 15.06}
\\
   flare (7)     & 16.42 & 16.38      & 16.38        & {\bf 15.57}      
	      & 16.17  & 16.07 & 16.09 & {\bf 16.03}
\\
   glass (6)     & 33.84 & 34.02      & 32.18        & {\bf 31.86}      
	      & 39.37 & 38.46 & 38.43 & {\bf 38.13}
\\
   isolet (26)    & 27.30 & 24.60      & {\bf 12.40}  & 14.60            
	      & 35.30  & 26.50 & {\bf 8.40} & {\bf 8.40}
\\
   kropt (18)     & 40.32 & 39.66      & 36.50        & {\bf 35.81}      
	      & 58.55 & 58.09 & {\bf 56.34} & 57.06
\\
   letter (25)    & 16.53 & 15.96      & {\bf 9.58}   & 11.77            
	      & 51.84 & 49.89 & {\bf 16.66} & 17.62
\\
   lymph (4)     & 25.22 & 22.28      & {\bf 21.83}  & 22.28            
	      & 24.32 & 24.20 & {\bf 23.86} & 24.07
\\
nursery (5)      & 3.55  & {\bf 3.49} & {\bf 3.49}   & {\bf 3.49}       
	      & {\bf 7.36} & 7.41 & 7.39 & 7.39 
\\
optdigits (10)    & 15.50  & 13.50       & {\bf 10.60}   & 12.20             
	      & 18.40 & 11.70  & {\bf 5.00} & 5.90
\\
page-blocks (5)  & 2.99  & {\bf 2.84} & 3.00         & 2.95         
	      & 4.06 & 3.31 & {\bf 3.12} & 3.21
\\
pendigits (10)    & 8.00   & 7.60       & {\bf 7.00}    & 7.60             
		& 23.40 &  22.40  & 6.10 &  {\bf 5.10} 
\\
satimage (6)     & 14.60 & 15.10      & {\bf 14.30}  & {\bf 14.30}      
		& 25.80  & 24.50   & 15.20  & {\bf 15.10} 
\\
soybean (19)      & 15.70  & {\bf 13.00} & {\bf 13.00}   & \bf{13.00}       
		& 16.80 & 16.50   & {\bf 13.60} & {\bf 13.60}
\\
vehicle (4)       & 30.86 & 31.11      & 31.57        & {\bf 28.93}      
		& 21.60 & 21.37 & 20.78 & {\bf 20.31}
\\
vowel (11)        & 29.06 & 28.92      & 24.64        & {\bf 24.57}      
		& 35.85 & 30.53 & {\bf 11.85}  & 12.90
\\
yeast (10)        & 44.04 & 44.21      & {\bf 43.99} & 44.06  
		& 45.13 & 43.66 & {\bf 42.28} & 43.26
\\
\hline
\end{tabular}
\end{small}
\caption{
Test error rates (in \%) using J48 and logistic regression as binary learners. 
AP and FT stand for All-Pairs and Filter Tree respectively.
APFT is the All-Pairs variant of the Filter Tree.
}
\label{T1}
\end{table*}
\end{document}